\newtheorem{theorem}{Theorem}
\newtheorem{corollary}{Corollary}
\begin{document}

\begin{frontmatter}
\title{Macroscopic Traffic Flow Modeling with Physics Regularized Gaussian Process: Generalized Formulations}

\author[1]{Yun Yuan}
\author[2]{Zhao Zhang}
\author[2]{Xianfeng Terry Yang\corref{cor1}*}
\ead{x.yang@utah.edu}
     
\address[1]{College of Transportation Engineering, Dalian Maritime University, Dalian, 116026, China}
\address[2]{Department of Civil \& Environmental Engineering, University of Utah, Salt Lake City, UT 84112, USA}
    \begin{abstract}
    Despite the success of classical traffic flow models and data-driven (e.g., Machine Learning - ML) approaches in traffic state estimation, those approaches either require great efforts in parameter calibrations or lack theoretical interpretations. 
    As a hybrid approach, Physics Regularized Gaussian Process (PRGP) can encode physics models, i.e., classical traffic flow models, into the Gaussian process (GP) architecture and so as to regularize the ML training process. However, the existing PRGP architecture requires the encoded physics model to be with continuous formulations, since the embedded augmented latent force model (LFM) uses a differential operator to process it. Such a strong assumption could significantly limit the applications of PRGP in a broader area. To address such an issue, this study proposes a generalized PRGP model, proves the existence of the regularization structure on a novel theoretical basis, and shows the applicability of a list of operators. Then, based on the derived approximate posterior objective function, an efficient alternating stochastic optimization algorithm is developed and proven. To show the effectiveness of the proposed model, this paper conducts empirical studies on a real-world dataset which is collected from a stretch of I-15 freeway, Utah. Results show the enhanced PRGP model can outperform the previous compatible methods, such as calibrated physics models and pure machine learning methods, in estimation accuracy and resistance to  data flaws.
    \end{abstract}

    \begin{keyword}
        Second-order traffic flow model; traffic state estimation; generalized physics regularized Gaussian process; discretized physics model 
    \end{keyword}

\end{frontmatter}

\section{Introduction}\label{sec:1}
In view of the steady increase of the number of vehicles and the occurrence of traffic congestion, traffic management represents an important alternative to improve the performance of traffic systems with limited efforts \citep{fountoulakis2017highway}.
As a precursive step of traffic management strategies, the full traffic state (i.e. flow, density, and speed) on highways should be estimated from the observed data (i.e. traffic counts, vehicle trajectories, etc.). However, in most cases, traffic state estimation (TSE) models can only utilize limited information from traffic detectors as inputs \citep{bekiaris2016highway}.

For example, traffic flow models were proposed based on continuum fluid approximation to describe the aggregated behavior of traffic.
Those models can generally be derived as partial differential equations (PDE) under ideal theoretical conditions, such as the first-order Lighthill-Whitham-Richards (LWR) model \citep{lighthill1955kinematic,richards1956shock}, the second-order Payne-Whitham (PW) model \citep{payne1971models,whitham1975linear}, and the second-order Aw-Rascle-Zhang (ARZ) model \citep{aw2000resurrection,zhang2002non}.
However, these models cannot be directly used to solve TSE. To address this issue, the previous studies discretized PDE formulations by the road segment and time period, such as the Godunov scheme \citep{lebacque1996godunov,daganzo1994cell}, the upwind scheme \citep{lebacque2007generic}, the Lax–Friedrichs scheme \citep{wong2002multi,gottlich2013numerical}, and the Lax–Wendroff scheme \citep{michalopoulos1993continuum}.
As a seminal work, \cite{papageorgiou1989macroscopic} discreted the PW model, METANET, and succeeded in reproducing complex traffic phenomena, and METANET and its reformulations have many successful applications in later studies.
To calibrate the traffic flow model in real-world applications, observations from stationary sensors (e.g., inductive loop, ultrasonic, radar, camera detectors) are usually leveraged and aggregated to average traffic flow and instant speed at a certain resolution.
However, their accuracy may be not reliable due to detection faults and uncertainties, such as frequent data missing and/or double counting of loop detectors \citep{chen2003bayesian}.
To account for such data uncertainties, researchers developed stochastic traffic flow models \citep{gazis1971line, szeto1972application, gazis2003kalman}, which were performed by adding Gaussian noise terms to the model expressions to capture those noises.
As a stochastic adaption to the base model, the stochastic METANET is enhanced by adding flow and speed errors in the formulation and its parameters are estimated by Extended Kalman filter (EKF) \citep{wang2005real}. Notably, Kalman filter (KF) and its extensions are well-known data assimilation methods, including unscented Kalman filter (UKF) \citep{mihaylova2006unscented}, ensemble Kalman filter (EnKF) \citep{work2008ensemble}, particle filter (PF) \citep{mihaylova2004particle}, etc.
However, \cite{jabari2012stochastic,jabari2013stochastic,seo2017traffic} pointed out that simply adding noise terms is theoretically flawed. 

With the advances in data collecting and processing technologies, data-driven methods have been developed dramatically in recent years. 
Data-driven methods does not require explicit theoretical assumptions, such as fundamental diagrams and conservation law \citep{smith2003exploring,chen2003detecting}. 
For example, machine learning (ML) models are prevailing in leveraging the voluminous data and capturing the stochasticity in TSE \citep{zhong2004estimation,ni2005markov,yin2012imputing,tang2015hybrid,tak2016data,li2013efficient,tan2014robust,tan2013tensor,duan2016efficient,polson2017deep,wu2018hybrid,polson2017bayesian,liang2018deep,xu2020ge}. However, due the data-driven nature, ML models are prone to data-induced errors. Lack of the high-quality data would unfortunately result in significant performance drops of ML models due to the detection system and random errors, communication failure, and storage malfunction. Hence, when the data contain unignorable outliers, pure ML estimation will be biased due to the misleading training data \citep{yuan2021macroscopic}. Although implementing a data screening and correction function before the ML training process could be helpful, in most cases, those incorrect data are not even able to be identified without further information \citep{lu2014algorithm}. 

Therefore, hybrid methodologies which fuse capability of the existing classical traffic flow models and pure ML models offer a new alternative to address the TSE challenges. Hybrid models also bridge the researches of classical traffic flow models and novel data-driven approaches. Among them, our pioneer work proposed the innovative Physics Regularized Machine Learning (PRML) model to leverage the well-investigated theoretical formulations, such as fundamental diagrams and conservation law, to overcome the flawed data challenge in ML theories \citep{yuan2021macroscopic}. Compared with physics (i.e., macroscopic traffic flow) models, the PRGP model can capture the uncertainties in estimation which beyond the capability of closed-form expressions and eliminate the efforts in calibrating model parameters. In comparison to pure ML models, the PRML is more resistant to the data noise/flaw as valuable knowledge from physics models can help to regularize the learning process. 

Fig.~\ref{fig:gpvsprgp} compares the concepts of pure Gaussian Process (GP) and Physics Regularized GP by depicting the observed traffic state, the GP estimated traffic state, and the PRGP estimated traffic state in three rectangles, where the blue square are for noisy observed states, pink squares represents biased observations, white squares represents the unobserved states, and green squares are for the estimated states. In the real-world cases, the raw data may be biased, noisy and missing due to system and communication failure, etc. Note that the flow, density and speed do not have physical meanings and are only separated isotropic dimensions in the pure GP model. To repair the data-bared flaw, the PRGP leverages the \emph{a priori} dynamics between the traffic state measures for improving the estimation accuracy and robustness.

\begin{figure}[H]
    \centering
    \includegraphics[width=0.8\textwidth]{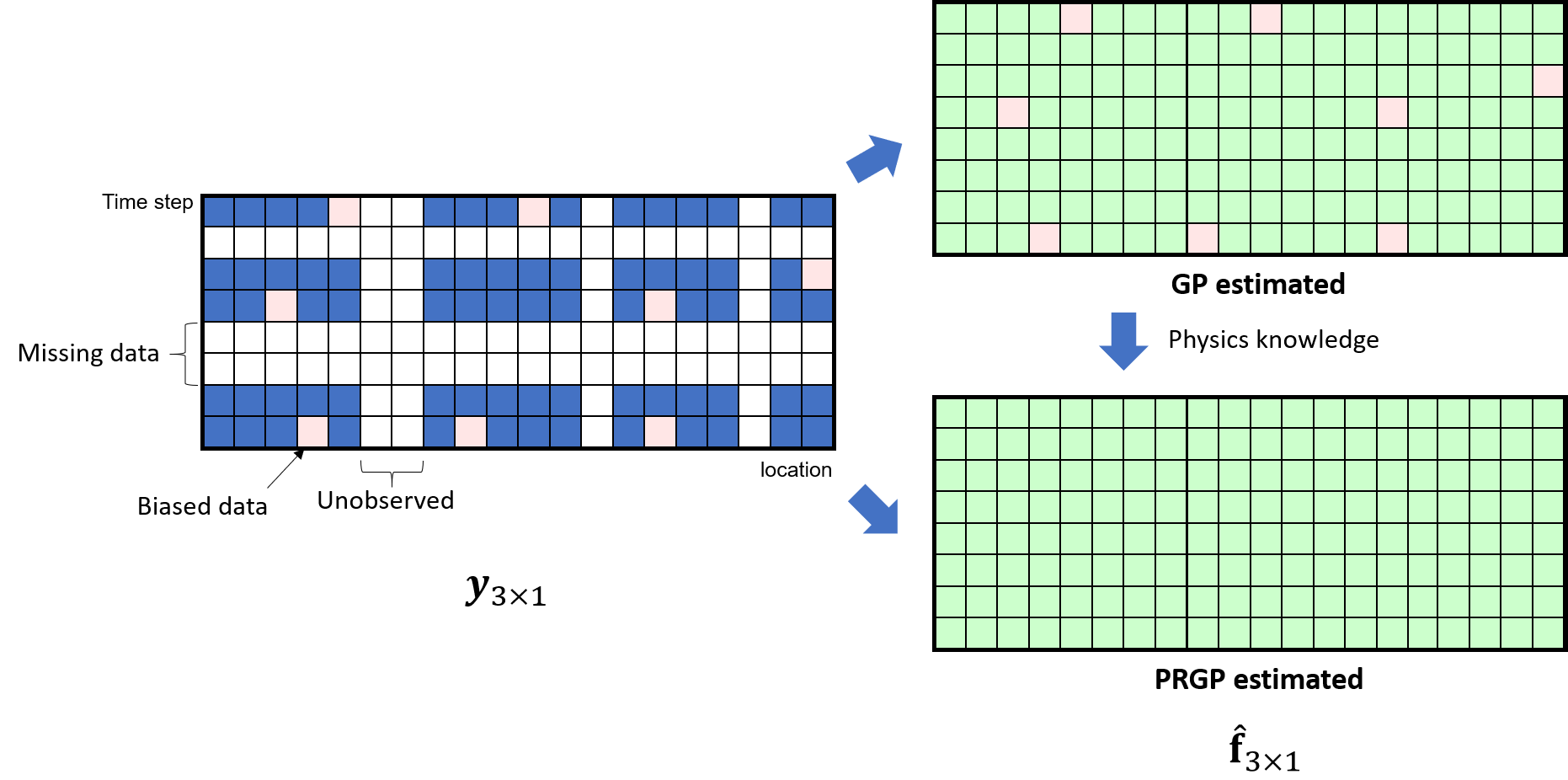}
    \caption{Conceptional comparison between PRGP and GP}
    \label{fig:gpvsprgp}
\end{figure}

However, relied on the augmented latent force model (LFM), the previous PRGP model can only employ PDEs, such as continuous traffic models, as the regularizer due to its theoretical basis. The applicability of PRGP on non-PDE equations, especially discretized traffic flow models, is unexplored. Although the current PRGP model is designed for physics models formulated in PDEs, it may also be applicable to non-PDE equations by similar encoding techniques. Hence, to investigate the applicability of the PRGP model in a broader application domain, this study aims to further advance this foundational theory by developing a new modeling method to encode non-PDE models into GP. Accordingly, this study also reformulates the evident lowerbound of the log-posterior to fix the compatibility of the PRGP model and the discretized traffic flow models. 

More specifically, this study contributes to the literature in the following aspects: \\
(a) To extend the capability of PRGP and remove the dependency on the augmented LFM, this paper rebases the theoretical basis by reformulating a generalized PRGP, proving the existence of physics based GP, and presenting the necessary condition of encoding the physics models in the PRGP model; \\
(b) To infer the generalized PRGP, an efficient alternating stochastic optimization algorithm is developed by deriving the objective function and proving the correctness of the Bayesian stochastic algorithm on the generalized PRGP; and \\
(c) This paper conducts the real-world case study to validate the capability and the robustness of the generalized PRGP with a discretized traffic model. 

The remainder of this paper is organized as follows.
Section~\ref{sec:2} shows the discrete TSE, GP and PRGP modeling.
In Section~\ref{sec:3}, the integrated GP and physics model equations are formulated for encoding physical knowledge into Bayesian statistics, and the posterior regularized inference algorithm are presented.
In Section~\ref{sec:4}, the case study on a real-world data from the interstate freeway I-15 is conducted to justify the proposed methods.
The conclusion section summarizes the critical findings and future research directions.

\section{Review of Related Models}\label{sec:2}
\subsection{Notations and Variable Definitions}
For the convenience of discussion, Table~\ref{tab:notation} summarizes key notations that have been used in the generalized PRGP model:

\begin{longtable}{p{5cm}p{10cm}}
\caption{List of key notations in this study}\label{tab:notation}\\
\toprule
Notation & Definition\\
\midrule
PRGP model notations\\
\midrule
$\mathcal{D}$ & the training data set;\\
$d,d^\prime$ & the dimensions of the input and output, respectively;\\
$a$
$f,\hat{f}$ & the (estimated) mapping from $\mathbf{x}$ to $\mathbf{y}$;\\
$\mathbf{f}$ & the function value of the mapping $f$;\\
$\hat{\mathbf{f}}$ & the estimated function value;\\
$g$& the right-hand side value of physical equations;\\
$\mathbf{g}$ & the vector of the right-hand side of physical equations;\\
$\mathbf{I}$ & the identity matrix;\\
$j,p$ & the index of the observation in the data set;\\
$K$ & the kernel function;\\
$\mathbf{K_f}$ & the kernel value matrix regarding $\mathbf{X}$;\\
$\mathbf{\hat{K}},\mathbf{K_g}$ & the kernel value matrix regarding inputs $\mathbf{Z}$;\\
$\mathbf{K}_*$ & the kernel value matrix regarding the new inputs $\mathbf{X}^*$;\\
$k$ & the index of the time step;\\
$\mathcal{L}$ & the objective function and the lowerbound of evidence lowerbound;\\
$\mathcal{N}$ & the vectorized Gaussian distribution;\\
$\mathbb{N}$ & the natural number set; \\
$n$ & the number of observations, in another word, the sample size;\\
$m$ & the number of pseudo observations;\\
$t$ & the index of the algorithm iteration;\\
$W$& the total number of physics equations;\\
$w$& the index of physics equations;\\
$\mathbf{X}$ & the data input vectors of size $n$;\\
$\mathbf{X}^*$ & the separated input vectors for estimation;\\
$\mathbf{x}$ & the model input vector, i.e. location, time;\\
$\mathbf{Y}$ & the data output vectors of size $n$;\\
$\mathbf{y}$ & the model output vector, i.e. flow, speed, density;\\
$\mathbf{Z}$ & the pseudo-observation input vector of size $m$;\\
$\mathbf{z}$ & the pseudo-observation input;\\
$\bar{\tau}$ & the isotropic Gaussian noise level;\\
$\mu,\sigma$ & the mean and standard deviation of the probability distribution;\\
$\eta_1, \eta_2, \eta_3, \ldots$ & kernel parameters;\\
$\mathbf{0}$ & the pseudo-observation output vector;\\
\midrule
METANET model notations\\
\midrule
$I$ & the number of the highway segment;\\
$i$ & the index of the highway segment;\\
$q_{i,k}$ & the total flow at the end of segment $i$;\\
$r_i$ & the inflow of vehicles at on-ramps;\\
$s_i$ & the outflow of vehicles at off-ramps;\\
$T$ & the time-discretization step;\\
$v_f$       &  the free-flow speed;\\
$v_{i,k}$ & the average speed at segment $i$;\\
$\alpha$    &  the exponent of the stationary speed equation;\\
$\beta_{i,k}$ & the departure rate;\\
$\Delta_i$ & the segment length at the segment;\\
$\nu,\delta,\tau,\kappa$    &  the model parameters;\\
$\rho_{i,k}$ & the density at the end of segment $i$;\\
$\rho_{cr}$ & the critical density;\\
$\lambda_i$ &  the number of lanes of segment $i$;\\
$\xi_{i,k}^q$ &  the zero-mean Gaussian white noise acting on the empirical flow equation;\\
$\xi_{i,k}^v$ &  the zero-mean Gaussian white noise acting on the empirical speed equation;\\
\midrule
Algorithm notations\\
\midrule
$\mathcal{D}$ & the dataset;\\
$\mathbf{K_f,K_g}$ & the kernel matrix of a specific input vector;\\
$\mathbf{k_x}$ & the kernel function of a specific input;\\
$\mathbf{K}_{**},\mathbf{k}_*$ &kernel matrix of new inputs;\\
$\mathcal{L}$ & the objective function, the sum of evidence lowerbound of posterior;\\
$\mathcal{L}_v,\mathcal{L}_q,\mathcal{L}_f,\mathcal{L}_g$ & partial terms of the objective function;\\
$s$ & the number of pseudo-input points;\\
$\mathbf{\bar{X}}$ & pseudo-input points;\\
$\bar{\mathbf{x}}_i,\bar{\mathbf{x}}_{i^\prime}$& a specific data point;\\
$\mathbf{\bar{Y}}$ & pseudo-outputs;\\
$\mathbf{\bar{f}}$ & pseudo-estimations;\\
$\mathbf{X^*},\mathbf{y}^*$ & new inputs and targets;\\
$\Lambda$ & the diagonal kernel vector;\\
$\gamma$ & the positive coefficient for the regularization effect;\\
$\mu_*,\sigma_*$ & mean and variance of new inputs and targets;\\ 
$\theta$ & the vector of all trainable kernel and model parameters;\\
$\theta^{(t)}$ & the value of parameter at the $t^{th}$ iteration;\\
$\phi$ & learning rate.\\
\bottomrule
\end{longtable}

\subsection{Second order traffic flow model and its stochastic extensions}
Discretizing partial differential equation is well investigated in the literature \citep{wang2022real}. In this section, we take an example for encoding a discretized traffic flow model in the PRGP modeling instead of enumerate existing discretized models exhaustively.
As an influential study in the literature, \cite{papageorgiou1989macroscopic} proposed a discrete macroscopic traffic flow model, METANET, which subdivided the highway sketch into $I$ segments and considered the density $\rho_{i,k}$ of highway segment $i=1,\ldots, I$ at time step $k$ to be the number of vehicles in the segment divided by the segment length $\Delta_i$. The dynamics of the density can be described by Eq.~\ref{eq:wp1}.

\begin{equation}
    \rho_{i,k+1}=\rho_{i,k}+\frac{T}{\Delta_i\lambda_i}[q_{i-1,k}-q_{i,k}+r_{i,k}-s_{i,k}]
    \label{eq:wp1}
\end{equation}
The departure flow is assumed to be a portion of the flow at the segment in Eq.~\ref{eq:wp2}. It is assumed that any unmeasured on-ramp and off-ramp are constant, or, effectively, slowly varying so that the ramp flow may be captured by a random walk.

\begin{equation}
    s_{i,k}=\beta_{i,k}\cdot q_{i-1,k}
    \label{eq:wp2}
\end{equation}
The dynamics of the speed can be described by Eq.~\ref{eq:wp3}.

\begin{equation}
    v_{i,k+1}=v_{i,k}+\frac{T}{\tau}[V(\rho_{i,k})-v_{i,k}]+\frac{T}{\Delta_i}v_{i,k}(v_{i-1,k}-v_{i,k})-\frac{\nu T}{\tau\Delta_i}\frac{\rho_{i+1,k}-\rho_{i,k}}{\rho_{i,k}+\kappa}-\frac{\delta T}{\Delta_i\lambda_i}\frac{r_{i,k}v_{i,k}}{\rho_{i,k}+\kappa}
    \label{eq:wp3}
\end{equation}
The exponential fundamental diagram is shown in Eqs.~\ref{eq:wp4}-\ref{eq:wp5}.

\begin{equation}
    V(\rho)=v_f\textit{exp}\Big[-\frac{1}{\alpha}(\frac{\rho}{\rho_{cr}})^\alpha\Big]
    \label{eq:wp4}
\end{equation}

\begin{equation}
    q_{i,k}=\rho_{i,k}v_{i,k}\lambda_{i}
    \label{eq:wp5}
\end{equation}
where Eqs.~\ref{eq:wp1}, \ref{eq:wp3}, \ref{eq:wp4}, \ref{eq:wp5} are the well-known conservation equation, dynamic speed equation, stationary speed equation, and continuity equation, respectively; $\tau, \nu, \delta, \kappa, v_f, \rho_{cr}, \alpha$ are positive model parameters which are given the same values for all segments, specifically, $v_f$ denotes the free-flow speed, $\rho_{cr}$ the critical density, and $\alpha$ the exponent of the stationary speed equation. 
Considering the limitation of the METANET model in representing real-world traffic fluctuations, \cite{wang2005real} added Gaussian error terms $\xi^v_{i,k}, \xi^q_{i,k}$ to the flow and speed equations (Eqs.~\ref{eq:wp3r}-\ref{eq:wp5r}) to capture the random errors of traffic detectors. 

\begin{equation}
    v_{i,k+1}=v_{i,k}+\frac{T}{\tau}[V(\rho_{i,k})-v_{i,k}]+\frac{T}{\Delta_i}v_{i,k}(v_{i-1,k}-v_{i,k})-\frac{\nu T}{\tau\Delta_i}\frac{\rho_{i+1,k}-\rho_{i,k}}{\rho_{i,k}+\kappa}-\frac{\delta T}{\Delta_i\lambda_i}\frac{r_{i,k}v_{i,k}}{\rho_{i,k}+\kappa}+\xi^v_{i,k}
    \label{eq:wp3r}
\end{equation}

\begin{equation}
    q_{i,k}=\rho_{i,k}v_{i,k}\lambda_{i} + \xi^q_{i,k}
    \label{eq:wp5r}
\end{equation}
where $\xi^v_{i,k}, \xi^q_{i,k}$ denote zero-mean Gaussian white noise acting on the empirical equations and the approximate speed and flow equations, respectively, to reflect the modeling inaccuracies. Then an EKF function is implemented to dynamically correct the model estimates based on detector measurements. 
Notably, despite the successful applications and extensions, the EKF-based model may possibly produce infeasible behaviors, such as negative speed and information propagating faster-than-vehicle speed. This is due to the fact that nonlinear functions of Gaussian noise typically produce non-Gaussian and non-zero mean random noises \citep{daganzo1995requiem,del1994reaction,hoogendoorn2001state,papageorgiou1998some}. In the meantime, the calibration of model parameters and EKF initial covariance matrix, which often requires tremendous efforts, plays a key role in affecting TSE accuracy. In this study, METANET and its extended version with EKF will both serve as benchmark models to evaluate the performance of the proposed PRGP model.  

\subsection{Review of Gaussian Process and Physics Regularizer}
This section reviews the key concept of the conventional Gaussian Process (GP) and its applications in the TSE problem, describes the modeling structure of the PRGP, and illustrates how to encode the physical knowledge (i.e. traffic flow model) into the GP model.

GP is a data-driven method for capturing the similarity between the system states, of which the core idea is to learn the kernel function (i.e. covariance) between variables and to predict (or estimate) the target by the linear combination of the training data \citep{rasmussen2003gaussian}. 
GP assumes that the Gaussian noise exists in the data $\mathbf{Y}$. Given the data $\mathbf{X},\mathbf{Y}$ and the new input $\mathbf{x}^*$, the noise-free function value $\mathbf{f}$ can be estimated based on Eq.~\ref{eq:pfxxy}, where the kernel $K$ is defined as the non-parametric smooth positive-definite covariance function with parameters $\eta_1,\eta_2,\eta_3,...$ \citep{bishop2006pattern}.  
\begin{equation}
    \label{eq:pfxxy}
    p(\mathbf{f}(\mathbf{x}^*)|\mathbf{x}^*,\mathbf{X},\mathbf{Y})=\mathcal{N}(\mu(\mathbf{x}^*),\sigma(\mathbf{x}^*))
\end{equation}
\begin{equation}
    \label{eq:mu_def}
    \mu(\mathbf{x}^*)=\mathbf{K}_*^\intercal (\mathbf{K}+\bar{\tau}^{-1} \mathbf{I})^{-1} \mathbf{Y}
\end{equation}
\begin{equation}
    \label{eq:nu_def}
    \sigma(\mathbf{x}^*)=K(\mathbf{x}^*,\mathbf{x}^*)-\mathbf{K}_*^\intercal(\mathbf{K}+\bar{\tau}^{-1} \mathbf{I})^{-1} \mathbf{K}_*
\end{equation}
\begin{equation}
    \label{eq:k_def}
    \mathbf{K}_*=
    \begin{bmatrix}
    K(\mathbf{x^*,x_1})&\ldots&K(\mathbf{x^*,x_n})
    \end{bmatrix}
    ^\intercal
\end{equation}

To apply GP in the TSE problem, the designed concept is illustrated in Fig.~\ref{fig:framework}, where the discrete traffic state estimation problem is described as taking the inputs $q,v$ from the stationary detectors at $0,\ldots,i-1,\ldots$ or probe vehicles to estimate the unobserved traffic state $q,v$ at the other locations. This model integrates the stochastic METANET model and GP, of which the key task is to learn the kernel functions of traffic flow $K^{(q)}$ and the kernel functions of traffic speed $K^{(v)}$. The kernel function is defined as the covariance of the values of traffic flow (or traffic speed) at two locations or two time intervals. Empirically, the formulations of the kernel functions can be selected to be same or different. 
The input $\mathbf{x}$ represents the index of the segment and the time step, the output $\mathbf{y}$ represents the corresponding vector of flow, density, speed. Leveraging the GP, we can predict the unobserved traffic states $\hat{\mathbf{f}}$ from the samples $\mathbf{(X,Y)}$.

\begin{figure}[H]
    \centering
    \includegraphics[width=0.8 \textwidth]{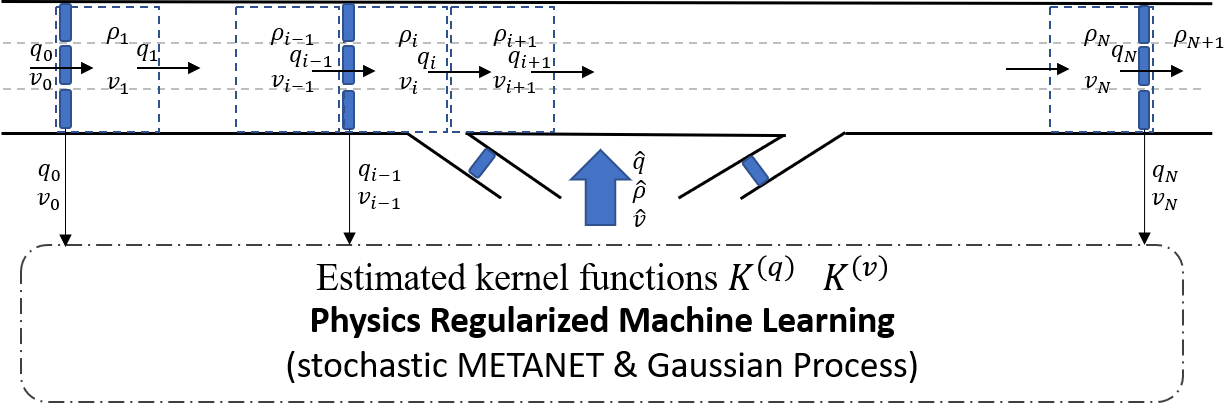}
    \caption{The proposed model for physics regularized Gaussian process learning}
    \label{fig:framework}
\end{figure}

However, it should be noted that GP is limited in addressing data quality issue and showing interpretability by physical meanings. This is also commonly recognized as a critical limitation of pure data-driven approaches and many ML models suffer from the same deficiency. To address this issue, \cite{wang2020physics} introduced the general Physics Regularized Machine Learning concept to extend the conventional GP to incorporate PDEs as the regularizer in the posterior inference algorithm. The physics model-based regularization is conducted by encoding the physics equations into GPs and adding the corresponding log-posterior into the inference objective function as a penalty term. The Latent Force Model (LFM) \citep{alvarez2013linear} is augmented to create a generative component for regularizing the original GP with a differential equation. The original LFM assumes the formulation of the PDE is given and the differential result is decomposed with the Green's function. The original LFM is solved by assigning a GP prior and the restrictive convolution operation. The augmented LFM is solved by conducting differentiation operation to obtain the latent force and regularizing it with another GP prior in a reversed direction. Using the augmented LFM, the differential equation is encoded into the so-called shadow GP. 
Despite the capability of encoding PDE into GP, the original PRGP model was developed with a single output variable, and was tested on one single-variable differentiable physics equations. Following the same line, our later study \citep{yuan2021macroscopic} extended the PRGP model to handle the multiple outputs and multiple physics equations simultaneously, and applied the PRGP to the TSE problem. To address this issue, the PRGP employs the valuable physical knowledge, from the classical traffic flow models, to regularize the training process for more robust performances. In the PRGP model, the physical knowledge (i.e. traffic flow models) is encoded into GPs, which captures both the stochasticity due to flawed/noisy data as well as the unobserved factors, such as missing on-ramp or off-ramp data. 
Given the differential operator $\Psi$ can be linear or nonlinear physics differential operator, the augmented LFM equation is formulated in Eq.~\ref{eq:nonlinearlfm} \citep{yuan2021macroscopic}. Augmented LFM is based on solving the PDE numerically since $\Psi$ is defined as a differential operator, where $g(\cdot)$ represents the unknown latent force functions, $f(\mathbf{x})$ is the function to be estimated from data $\mathcal{D}$. 
\begin{equation}
    \label{eq:nonlinearlfm}
    \Psi f(\mathbf{x}) = g(\mathbf{x})
\end{equation}

In the previous works, the PRGP model is developed from using the augmented LFM to solve the PDE in a data-driven framework. Despite the successful application in addressing data randomness and flaw, the previous PRGP model can only employ PDEs as the regularizer due to its theoretical basis. When the PDEs cannot be obtainable, the applicability of PRGP on the discretized traffic models is not proven. Whether this PDE-oriented method would work is questionable since discretized models are neither continuous nor differentiable. It is not determined that whether the non-PDE physics equations can be encoded in the PRGP model. Particularly, in the numerical experiment, the continuous PDE can encourage the smooth convergence of the algorithm. Thus, it is a challenge to enhance the PRGP with applicability of generalized traffic models. 

\section{Generalized Physics Regularized Gaussian Process}\label{sec:3}

\subsection{Model Development}
To fill the existing research gap, this paper rebases the theoretical basis of the PRGP model. By removing the augmented LFM, this study generalizes PRGP to encode the non-PDE physics equations, such as the discretized traffic flow models. The theory is developed in three steps: (a) proving the existence of physics based GP, which serves as the theoretical basis of establishing the PRGP model; (b) deriving the objective function of inferring the PRGP model, which shows the computational process of the inference algorithm; and (c) presenting the necessary condition of encoding the physics models in the PRGP model.

The physics equations are supposed to be in the canonical form of Eq.~\ref{eq:physics}, where $\Phi$ refers to the linear or nonlinear physics operator, $\mathbf{f(Z)}$ is the true output value. In the discretized model, the physics equations are converted into the desired function forms by moving terms to one side of equation and let the other side be zero. 
\begin{equation}
    \mathbf{\Phi[f(Z)]=0}
    \label{eq:physics}
\end{equation}

Considering the unobserved latent value and the random error, the physics equation is encoded into PRGP in form of Eq.~\ref{eq:gz}, where $\mathbf{g}$ is assumed to be a GP, $\mathbf{\hat{f}(Z)}$ is the estimated outputs upon the input $\mathbf{Z}$. When the data perfectly meets with the physics model function, the remaining error $\mathbf{g}$ is supposed to have zero mean and zero variance, which is consistent with Eq.~\ref{eq:physics}. 

\begin{equation}
    \mathbf{\Phi[\hat{f}(Z)]=g}
    \label{eq:gz}
\end{equation}

To establish the PRGP model, Theorem~\ref{thm:1} shows the existence of another GP by applying the physics model on the original GP.
\begin{theorem}\label{thm:1}
    Given $\mathbf{\Phi}[\cdot]$ is a physics model function of the output of the GP $\mathbf{f}$ of data $\mathcal{D}$, these exists a GP $\mathbf{g}$ satisfying the following equation. 
    \begin{equation}
        \label{eq:function}
        \Phi \mathbf{f}(\mathbf{x}) = \mathbf{g}(\mathbf{x})
    \end{equation}
\end{theorem}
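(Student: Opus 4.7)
My plan is to verify the claim by constructing the candidate $\mathbf{g}$ directly from $\Phi\mathbf{f}$ and checking the defining property of a Gaussian process---joint Gaussianity of all finite-dimensional marginals---then invoking Kolmogorov's extension theorem to assemble the full process. The argument naturally splits according to whether $\Phi$ is linear or nonlinear, since the closure property of GPs under these two classes of operators is very different.

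For linear $\Phi$, I would set $\mathbf{g}(\mathbf{x}) := \Phi\mathbf{f}(\mathbf{x})$ pointwise. Because the discretized operators of interest (METANET-style conservation and continuity terms on their linear branch) reduce to finite linear combinations of $\mathbf{f}$-values at stencil points, every finite sample $(\mathbf{g}(\mathbf{x}_1),\ldots,\mathbf{g}(\mathbf{x}_n))$ is the deterministic image of a jointly Gaussian vector under a fixed matrix, and is therefore jointly Gaussian. The induced mean $\mu_g = \Phi\mu_f$ and kernel $K_g(\mathbf{x},\mathbf{x}') = \Phi_{\mathbf{x}}\Phi_{\mathbf{x}'}K_f(\mathbf{x},\mathbf{x}')$ form a consistent projective family, so a GP $\mathbf{g}$ realizing them exists by Kolmogorov extension.

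For nonlinear $\Phi$---the relevant branch for equations like (\ref{eq:wp3})--(\ref{eq:wp5}) with products $\rho v$ and exponentials $V(\rho)$---the random field $\Phi\mathbf{f}$ is generically non-Gaussian, so pathwise equality with a GP is impossible. I would read (\ref{eq:function}) in the moment-matching sense and build $\mathbf{g}$ via $\mu_g(\mathbf{x}) := \mathbb{E}[\Phi\mathbf{f}(\mathbf{x})]$ and $K_g(\mathbf{x},\mathbf{x}') := \mathrm{Cov}(\Phi\mathbf{f}(\mathbf{x}),\Phi\mathbf{f}(\mathbf{x}'))$. Under mild growth or local-Lipschitz assumptions on $\Phi$---satisfied by the exponential fundamental diagram on compact density ranges---both moments exist, and $K_g$ is symmetric positive semi-definite by construction as a covariance, so the GP with specifications $(\mu_g,K_g)$ again exists by Kolmogorov extension.

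The main obstacle is reconciling the strict reading ``$\Phi\mathbf{f}=\mathbf{g}$ with $\mathbf{g}$ Gaussian'' with the failure of Gaussianity under nonlinear images. My intended resolution is to interpret $\mathbf{g}$ as the Gaussian projection of $\Phi\mathbf{f}$---the unique minimizer of KL divergence among Gaussian measures with the correct marginal supports---since this is precisely the surrogate that will subsequently drive the regularization term in the PRGP objective. The technically most delicate piece will be establishing integrability of $\Phi\mathbf{f}$ so that $\mu_g$ and $K_g$ are well defined; I would discharge it by restricting to operators of at most polynomial growth together with priors whose kernels admit finite Gaussian moments, which is consistent with the smooth positive-definite kernels already assumed in the PRGP setup.
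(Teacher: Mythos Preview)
Your proposal is sound under the interpretations you spell out, but it follows a genuinely different route from the paper. The paper does \emph{not} work at the level of the underlying random process: it conditions on the data $\mathcal{D}=(\mathbf{X},\mathbf{Y})$, identifies $\mathbf{f}$ with its \emph{posterior mean} $\hat{\mathbf{f}}(\mathbf{z}^*)=\mu_{\mathbf{f}}(\mathbf{z}^*)=\mathbf{K}_{\mathbf{f}*}^\intercal(\mathbf{K}_{\mathbf{f}}+\bar{\tau}_{\mathbf{f}}^{-1}\mathbf{I})^{-1}\mathbf{Y}$, applies $\Phi$ to that deterministic function, and then argues that one can always choose a kernel (invoking the expressive power of deep kernels) so that a second GP $\mathbf{g}$ has posterior mean $\mu_{\mathbf{g}}=\Phi\hat{\mathbf{f}}$. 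In other words, the paper reads Eq.~(\ref{eq:function}) as an equality of GP \emph{predictors}, and existence is discharged by kernel flexibility rather than by Kolmogorov extension.

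Your approach---closure of GPs under linear maps for the linear branch, and second-moment Gaussian projection for the nonlinear branch---is the more standard probabilistic argument and yields explicit formulas for $\mu_g$ and $K_g$ that the paper never writes down. It also makes transparent the limitation you correctly flag: for nonlinear $\Phi$, equality can only hold in a moment-matching or KL-projection sense. The paper's route, by contrast, sidesteps the non-Gaussianity issue entirely by never treating $\Phi\mathbf{f}$ as a random object, at the cost of a much weaker (mean-level, data-conditional) conclusion; what it buys is a statement that aligns directly with how $\hat{\mathbf{f}}$ is actually used in the regularizer of Theorem~\ref{thm:2}.
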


\begin{proof}
    The idea of the proof is to apply the physical operator on the mean and variance expressions, the resultant expressions are in the form of mean and variance of another GP. It means the physical operator is applied on the kernel function.
    Given the original GP upon the observation data $\mathcal{D}=(\mathbf{X,Y})$, the mean of the estimation can be formulated in the following equation. 
    \begin{equation}
        \mu_{\mathbf{f}}(\mathbf{z}^*)=\mathbf{K}_{\mathbf{f}*}^\intercal (\mathbf{K}_{\mathbf{f}}+\bar{\tau}_{\mathbf{f}}^{-1} \mathbf{I})^{-1} \mathbf{Y}
    \end{equation}
    Since the mean is the point with maximum probability of the Gaussian distribution, it is also used as the estimation of the outputs $\mathbf{\hat{f}}$ regarding the pseudo-inputs $\mathbf{z}$, as shown in the following equation.
    \begin{equation}
        \mathbf{\hat{f}}(\mathbf{z}^*) = \mu_{\mathbf{f}}(\mathbf{z}^*)
    \end{equation}
    Similarly, the r.h.s. of Eq.~\ref{eq:function} is formulated in the following equation.
    \begin{equation}
        \mu_{\mathbf{g}}(\mathbf{z}^*)=\mathbf{K}_{\mathbf{g}*}^\intercal (\mathbf{K}_{\mathbf{g}}+\bar{\tau}_{\mathbf{g}}^{-1} \mathbf{I})^{-1} \mathbf{\hat{f}}(\mathbf{z}^*)
    \end{equation}
    By applying the physical operator $\Phi$, the following equation holds.
    \begin{equation}
        \mu_{\mathbf{g}}(\mathbf{z}^*) = \Phi \mathbf{\hat{f}}(\mathbf{z}^*)
    \end{equation}
    Thus, Eq.~\ref{eq:function} is equivalent to the following equation.
    \begin{equation} \label{eq:p1e5}
        \mu_{\mathbf{g}}(\mathbf{z}^*) = \mu_{\mathbf{f}}(\mathbf{z}^*)
    \end{equation}
    To prove Eq.~\ref{eq:p1e5}, it is needed to find the proper kernel functions formulas $\mathbf{K}_{\mathbf{f}},\mathbf{K}_{\mathbf{g}},\mathbf{K}_{\mathbf{f*}},\mathbf{K}_{\mathbf{g*}}$. This is a trivial task to construct kernel functions since the feasibility assumption of the kernel function is weak. Especially, a deep kernel can be constructed to satisfy the condition \citep{wilson2016deep}. 
\end{proof}

Theorem~\ref{thm:1} shows that two GPs can be connected with physics equations, which is the theoretical basis of the proposed generalized PRGP. This is substantially different from the previous study \citep{yuan2021macroscopic} because this paper does not leverage the PDE and latent force models. In the previous study, the second GP is created by applying the linear or nonlinear operator on the first GP, and the second GP is basically the latent force. 

The posterior regularization is based on optimizing the parameters to maximize the evidence lowerbound (ELBO) of the posterior across the GP and the physical knowledge GP \citep{ganchev2010posterior}.
The ELBO of the proposed PRGP includes the model posterior on data and a penalty term that encodes the physics knowledge constraints over the posterior of the variables to encourage consistency with the equations.
Jointly maximizing the penalty term in ELBO can be viewed as a soft constraint over the pure GP model, therefore, estimating the PRGP model is equivalent to estimating the pure GP model with constraints on its posterior \citep{yuan2021macroscopic}.
To provide the theoretical basis of the inference algorithm, Theorem~\ref{thm:2} shows the formulation of the approximate ELBO $\mathcal{L}$ of the PRGP model.

\begin{theorem}\label{thm:2}
    The parameter inference of the PRGP model is to maximize the approximate ELBO $\mathcal{L}$ in Eq.~\ref{eq:lllhlb_traffic} regarding the parameters defined in Eq.~\ref{eq:theta} given the input variables are the observed data $\mathcal{D}=(\mathbf{X,Y})$.    
    \begin{equation}
        \label{eq:lllhlb_traffic}
        \max \mathcal{L}= \sum_{l=1}^{d^\prime} \log\Big[\mathcal{N}([\mathbf{Y}]_l|[\mathbf{\mu_f}]_l,[\mathbf{\sigma_f}]_l)\Big]
        +\sum_{w=1}^W\gamma_w \mathbb{E}_{p(\mathbf{Z})} \mathbb{E}_{p(\mu_{\mathbf{f}_w}|\mathbf{Z},\mathbf{X},\mathbf{Y})} [\log \Big[\mathcal{N}(\Phi \mu_{\mathbf{f}_w}|\mu_{\mathbf{g}_w},\sigma_{\mathbf{g}_w})]\Big]
    \end{equation}
    where
    \begin{equation}
        \mathbf{\sigma_f}=\mathbf{K_f(X,X)}+\tau^{-1}\mathbf{I}
    \end{equation}
    \begin{equation}
        \mathbf{\sigma_g}=\mathbf{K_g(Z,Z)}
    \end{equation}
    \begin{equation}
        \theta = 
        \begin{bmatrix}
        \mathbf{\theta_f} &    \mathbf{\theta_g}
        \end{bmatrix}^\intercal
        =
        \begin{bmatrix}
        \bar{\tau} & \eta &  \tau & \nu& \delta& \kappa& v_f& \rho_{cr}& \alpha & \cdots
        \end{bmatrix}^\intercal
        \label{eq:theta}
    \end{equation}
\end{theorem}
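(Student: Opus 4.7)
The plan is to derive the approximate ELBO via the posterior regularization framework of Ganchev et al. (2010), treating the physics equations as soft constraints that shape the posterior of the GP. First I would write down the unconstrained GP log-marginal likelihood $\log p(\mathbf{Y}\mid \mathbf{X},\theta)$, and then add the physics constraint $\Phi \mathbf{f}(\mathbf{Z}) = \mathbf{g}(\mathbf{Z})$ established by Theorem~\ref{thm:1} as an additional expected log-likelihood term weighted by $\gamma_w$ for each of the $W$ physics equations. The resulting objective is a lower bound on the constrained log-posterior because the constraint term is a non-negative expected log-density penalty that rewards configurations in which $\Phi\mathbf{f}$ matches the physics-prescribed GP $\mathbf{g}$, and because any variational distribution over $\mathbf{f}$ yields, through Jensen's inequality applied to $\log p(\mathbf{Y}\mid\mathbf{X},\theta)$, the standard evidence lower bound.

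For the first term, I would invoke the multi-output GP marginal likelihood: assuming independence of the $d^\prime$ output dimensions (which is already implicit in the kernel construction used earlier in the paper), the log-likelihood decomposes as $\sum_{l=1}^{d^\prime} \log \mathcal{N}([\mathbf{Y}]_l\mid [\mu_{\mathbf{f}}]_l,[\sigma_{\mathbf{f}}]_l)$ with $[\sigma_{\mathbf{f}}]_l = \mathbf{K_f(X,X)} + \tau^{-1}\mathbf{I}$ absorbing both the prior covariance and the isotropic noise. This step is routine once the across-dimension independence is stated.

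The second term requires more care. To enforce the physics constraint beyond the observed training points, I would introduce pseudo-inputs $\mathbf{Z}$ drawn from a distribution $p(\mathbf{Z})$ on the input domain and take an outer expectation. Given $\mathbf{Z}$, the posterior predictive distribution of the GP mean $\mu_{\mathbf{f}_w}$ conditional on $(\mathbf{X},\mathbf{Y})$ is Gaussian by the usual GP conditioning formulas, and applying $\Phi$ to this mean gives a quantity whose discrepancy from $\mathbf{g}_w \sim \mathcal{N}(\mu_{\mathbf{g}_w}, \sigma_{\mathbf{g}_w})$ is what the penalty measures. Taking expectations in the stated order---first over $p(\mu_{\mathbf{f}_w}\mid\mathbf{Z},\mathbf{X},\mathbf{Y})$, then over $p(\mathbf{Z})$---and attaching the weights $\gamma_w$ reproduces exactly the penalty written in Eq.~\ref{eq:lllhlb_traffic}. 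Finally, collecting the parameters of the two kernels and the METANET constants into $\theta$ as in Eq.~\ref{eq:theta} identifies the decision variable of the maximization.

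The hard part will be justifying that the resulting expression is an \emph{approximate} ELBO rather than only a heuristically penalized likelihood. When $\Phi$ is nonlinear, interchanging the expectation over $\mu_{\mathbf{f}_w}$ with the application of $\Phi$ is not exact; the inequality $\mathbb{E}[\log \mathcal{N}(\Phi \mu_{\mathbf{f}_w}\mid\cdot)] \le \log \mathcal{N}(\Phi \mathbb{E}[\mu_{\mathbf{f}_w}]\mid\cdot)$ (or its reverse) must be handled through a linearization of $\Phi$ at the posterior mean, which is the source of the word ``approximate'' in the statement. I would therefore spend most of the proof on this point: showing that the linearization error is bounded by the posterior variance $\sigma_{\mathbf{g}_w}$, so the expression remains a valid lower bound up to this controllable approximation, and verifying that all Gaussian densities involved are well defined by the positive-definiteness of $\mathbf{K_f}$ and $\mathbf{K_g}$.
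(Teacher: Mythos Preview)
Your high-level framing via posterior regularization is in the right spirit, but the derivation you sketch diverges from the paper's argument in two places that matter, and one of them is a genuine gap.

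First, the paper does not simply ``add a penalty term'' to the GP marginal likelihood. It introduces the pseudo-observations $\omega$ explicitly as dummy outputs of the physics component and works with the joint $p(\mathbf{Y},\omega\mid\mathbf{X}) = p(\mathbf{Y}\mid\mathbf{X})\,p(\omega\mid\mathbf{X},\mathbf{Y})^{\gamma}$. The second factor is then obtained by marginalizing the latent variables $\mathbf{g}$, $\mu_{\mathbf{g}}$, and $\mathbf{Z}$ out of the full joint, which is what produces the nested expectation $\mathbb{E}_{p(\mathbf{Z})}\mathbb{E}_{p(\mu_{\mathbf{f}}\mid\mathbf{Z},\mathbf{X},\mathbf{Y})}[\mathcal{N}(\Phi\mu_{\mathbf{f}}\mid\omega,\sigma_{\mathbf{g}})]$. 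Your proposal never sets up this probabilistic object, so the penalty term appears by fiat rather than as a marginal likelihood of the pseudo-observations; that makes it hard to argue the result is a bona fide evidence lower bound rather than a heuristic.

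Second, and more importantly, you locate the ``approximate'' in the wrong place. The paper's first term $\log p(\mathbf{Y}\mid\mathbf{X})$ is exact---it is just the GP marginal likelihood, no variational distribution or Jensen step is needed there. The lower bound arises entirely from the \emph{second} term: one has $\gamma\log\bigl[\mathbb{E}_{p(\mathbf{Z})}\mathbb{E}_{p(\mu_{\mathbf{f}}\mid\cdot)}\mathcal{N}(\Phi\mu_{\mathbf{f}}\mid\cdot)\bigr]$, and Jensen's inequality (concavity of $\log$) pushes the logarithm inside both expectations to give $\gamma\,\mathbb{E}_{p(\mathbf{Z})}\mathbb{E}_{p(\mu_{\mathbf{f}}\mid\cdot)}\log\mathcal{N}(\Phi\mu_{\mathbf{f}}\mid\cdot)$. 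That single Jensen step is the entire content of ``approximate ELBO.'' There is no linearization of $\Phi$ anywhere in the argument: $\Phi$ is applied directly to the posterior-mean sample $\mu_{\mathbf{f}}$, nonlinearity and all, and the inequality you worry about between $\mathbb{E}[\log\mathcal{N}(\Phi\mu_{\mathbf{f}}\mid\cdot)]$ and $\log\mathcal{N}(\Phi\mathbb{E}[\mu_{\mathbf{f}}]\mid\cdot)$ never enters. The effort you plan to spend bounding a linearization error is therefore misdirected; replace it with the one-line Jensen argument on the log of the marginalized physics likelihood.
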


\begin{proof}
    Generally, the ELBO of a posterior probability is yielded by analyzing a decomposition of the Kullback-Leibler (KL) divergence \citep{bishop2006pattern}. The idea of the proof is to find a tractable approximate ELBO of posterior probability $p(\mathbf{Y},\omega|\mathbf{X})$, where a positive parameter $\gamma$ is used to control the strength of regularization effect.
    The posterior probability $p(\mathbf{Y},\omega|\mathbf{X})$ is decomposed into $p(\mathbf{Y}|\mathbf{X})$ and $p(\omega|\mathbf{X},\mathbf{Y})^\gamma$.
    \begin{equation}\label{eq:post}
        p(\mathbf{Y},\omega|\mathbf{X})=p(\mathbf{Y}|\mathbf{X})p(\omega|\mathbf{X},\mathbf{Y})^\gamma
    \end{equation}
    
    First, $p(\mathbf{Y}|\mathbf{X})$ is the posterior probability of the pure GP, which is obtained with the propriety of GP.
    \begin{equation}\label{eq:pyx}
        p(\mathbf{Y}|\mathbf{X})=\mathcal{N}(\mathbf{Y}|\mathbf{\omega},\sigma_\mathbf{f})
    \end{equation}
    
    Second, by marginalizing out all the latent variables $\mathbf{g,\mu_{\mathbf{g}},Z}$ in $p(\mathbf{Y},\omega,\mathbf{g}, \mu_{\mathbf{g}},\mathbf{Z}|\mathbf{X})$ to yield $p(\omega|\mathbf{X},\mathbf{Y})$.
    \begin{equation}
        \label{eq:oyx}
        \begin{split}
            p(\omega|\mathbf{X},\mathbf{Y})&=\int_{\mathbf{Z}}\int_{\mathbf{g}}\int_{\mathbf{\mu_f(Z)}} p(\mathbf{Y},\omega,\mathbf{g}, \mu_{\mathbf{g}},\mathbf{Z}|\mathbf{X})\\
            &=\mathbb{E}_{p(\mathbf{Z})}\mathbb{E}_{p(\mu_\mathbf{f}(\mathbf{Z})|\mathbf{Z},\mathbf{X},\mathbf{Y})}\mathcal{N}(\Phi \mu_\mathbf{f}|\mathbf{\omega},\sigma_\mathbf{g})\\
        \end{split}
    \end{equation}

    Third, take the logarithm function on the both sides of Eq.~\ref{eq:post} and substitute Eq.~\ref{eq:pyx} and Eq.~\ref{eq:oyx} to yield Eq.~\ref{eq:lllh}. Note that the expectation term in Eq.~\ref{eq:lllh} brings the intractability.
    \begin{equation}
        \label{eq:lllh}
        \begin{split}
            \log [p(\mathbf{Y},\mathbf{\omega}|\mathbf{X})]=&\log [p(\mathbf{Y}|\mathbf{X})] + \gamma\log [p(\omega|\mathbf{X},\mathbf{Y})]\\
            =&\log [\mathcal{N}(\mathbf{Y}|\mathbf{\omega},\sigma_\mathbf{f})]
            +\gamma \log [\mathbb{E}_{p(\mathbf{Z})}\mathbb{E}_{p(\mu_\mathbf{f}(\mathbf{Z})|\mathbf{Z},\mathbf{X},\mathbf{Y})}\mathcal{N}(\Phi \mu_\mathbf{f}|\mathbf{\omega},\sigma_\mathbf{g})]
        \end{split}
    \end{equation}
    
    Forth, since the logarithm function is concave on its domain, the Jensen's inequality is used to find the evident lowerbound of the intractable expectation term in Eq.~\ref{eq:lllhlb}.
    \begin{equation}
        \label{eq:lllhlb}
        \begin{split}
            \log [p(\mathbf{Y},\mathbf{\omega}|\mathbf{X})]\geq \mathcal{L}= & \log [\mathcal{N}(\mathbf{Y}|\mathbf{\omega},\sigma_\mathbf{f})]
            +\gamma \mathbb{E}_{p(\mathbf{Z})}\mathbb{E}_{p(\mu_\mathbf{f}(\mathbf{Z})|\mathbf{Z},\mathbf{X},\mathbf{Y})}\log[\mathcal{N}(\Phi \mu_\mathbf{f}|\mathbf{\omega},\sigma_\mathbf{g})]\\
        \end{split}
    \end{equation}
\end{proof}
Theorem~\ref{thm:2} shows the inference of the proposed generalized PRGP is to maximize the approximate evident lowerbound of the posterior probability, which is the theoretical basis of the proposed algorithm. The formulation of the objective is similar to that in the previous study \citep{yuan2021macroscopic}, which shows the partial findings in the previous study are consistent with the proposed generalized PRGP.
Then, a critical question should be answered before encoding the physics equations in the generalized PRGP model: which kind of physics models is sufficient to be incorporated in the PRGP? To address this issue, Theorem~\ref{thm:3} shows a sufficient condition of the applicability of the physics equation in the proposed generalized PRGP. 
\begin{theorem}\label{thm:3}
    The approximate ELBO $\mathcal{L}$ is differentiable in all orders or differentiable in high orders regarding the kernel parameter $\eta$ (namely, at least in high orders) is a sufficient condition of the applicability of the physics equation in the PRGP model.
\end{theorem}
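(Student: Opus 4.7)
The plan is to reduce ``applicability'' to a concrete operational requirement: the alternating stochastic optimization algorithm that infers the model parameters (announced in the introduction and required by Theorem~\ref{thm:2}) must be able to compute gradients of $\mathcal{L}$ with respect to $\theta$, and in particular with respect to the kernel parameters $\eta$. Hence, it suffices to verify that differentiability of $\mathcal{L}$ in $\eta$ up to a sufficiently high order guarantees that every step of the gradient-based scheme is well-defined, the updates are stable, and a stationary point can be reached.

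First, I would decompose $\mathcal{L}$ into the pure-GP term and the regularization term exactly as written in Eq.~\ref{eq:lllhlb_traffic}. For the pure-GP term, standard GP results show that $\mu_{\mathbf{f}}$ and $\sigma_{\mathbf{f}}$ are as smooth in $\eta$ as the kernel $K_{\mathbf{f}}(\cdot,\cdot;\eta)$ itself, since they involve only matrix inversion and matrix-vector products of $K_{\mathbf{f}}$; consequently the log-Gaussian term inherits the same order of differentiability. For the regularization term, the operator $\Phi$ enters only through the composition $\log\mathcal{N}(\Phi\mu_{\mathbf{f}_w}\mid\mu_{\mathbf{g}_w},\sigma_{\mathbf{g}_w})$ inside two nested expectations. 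I would show that, provided this composition is differentiable in $\eta$ to the required order and its derivatives admit integrable envelopes, the dominated convergence theorem lets me interchange differentiation with both $\mathbb{E}_{p(\mathbf{Z})}$ and $\mathbb{E}_{p(\mu_{\mathbf{f}_w}\mid\mathbf{Z},\mathbf{X},\mathbf{Y})}$; hence $\mathcal{L}$ itself is differentiable in $\eta$ to the same order.

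Second, I would invoke the standard convergence theory for stochastic approximation: once $\mathcal{L}$ admits Lipschitz-continuous gradients in $\eta$ — which is implied by high-order differentiability — the iterates produced by the alternating stochastic optimization converge (almost surely or in expectation, under a decreasing step size) to a stationary point of $\mathcal{L}$. Therefore, under the stated condition, the inference algorithm can actually be executed to completion, and the physics operator $\Phi$ is ``applicable'' in the sense of the PRGP framework. Since the theorem only claims sufficiency, the converse direction is not needed.

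The main obstacle will be the justification of interchanging differentiation with the two nested expectations, because this is the step where the precise order of differentiability is used. One must supply a dominating envelope that is integrable both against the prior $p(\mathbf{Z})$ and against the Gaussian conditional $p(\mu_{\mathbf{f}_w}\mid\mathbf{Z},\mathbf{X},\mathbf{Y})$. Fortunately, both of these measures have moments of all orders, so any locally polynomial bound on the derivatives of $\log\mathcal{N}(\Phi\mu_{\mathbf{f}_w}\mid\mu_{\mathbf{g}_w},\sigma_{\mathbf{g}_w})$ will do; and high-order differentiability of $\Phi\mu_{\mathbf{f}_w}$ in $\eta$ in a neighborhood of the current parameter value is precisely what delivers such bounds via a finite Taylor expansion, closing the argument.
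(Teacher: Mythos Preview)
Your proposal is correct and actually more rigorous than the paper's argument, but it takes a substantially different route. The paper's proof is essentially a two-line chain-rule observation: since the kernel $K(\eta)$ is assumed positive-definite and smooth, and $\mathcal{L}$ depends on $\eta$ only through $K$, one writes $\partial\mathcal{L}/\partial\eta = (\partial\mathcal{L}/\partial K)\,(\partial K/\partial\eta)$ and concludes that $\mathcal{L}$ inherits the differentiability order of the kernel. There is no dominated convergence, no interchange of differentiation with the nested expectations, and no appeal to stochastic-approximation convergence theory. In effect, the paper's notion of ``applicability'' is merely ``the gradient of $\mathcal{L}$ in $\eta$ exists and can be computed by auto-differentiation,'' whereas you interpret it more strongly as ``the alternating stochastic scheme is well-posed and converges.'' Your dominated-convergence step and the Lipschitz-gradient argument genuinely buy the rigor the paper skips, but they also prove more than the paper intends; the convergence-theoretic second step, in particular, is not part of the paper's claim. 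It is also worth noting that the paper's own proof opens with ``If the physics equation is applicable \ldots\ the penalty term is differentiable,'' which reads as the converse of the stated sufficiency; your proposal at least keeps the logical direction straight.
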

\begin{proof}
    If the physics equation is applicable in the PRGP model, the penalty term is differentiable regarding the kernel parameter $\eta$ in all orders or differentiable in high orders.
    Obviously, $p(\mathbf{Y|X})$ is differentiable regarding the kernel parameter $\eta$. The kernel function $K(\eta)$ is differentiable in all orders or high orders. This is because the kernel function is assumed to positive-definite, smooth and has derivatives of all orders in its domain.
    \begin{equation}
        \frac{\partial \mathcal{L}}{\partial \eta} = \frac{\partial \mathcal{L}}{\partial K}\frac{\partial K}{\partial\eta}
    \end{equation}
    Thus, approximate ELBO $\mathcal{L}$ is differentiable in all orders or differentiable in high orders regarding the kernel parameter $\eta$.
\end{proof}

Theorem~\ref{thm:3} shows the physics equations should be formulated so that the objective function is differentiable regarding the parameters in the proposed generalized PRGP. The physics equations are considered as a linear combination of the basic mathematical operators, such as arithmetic and and differential operators. However, the applicable operators of encoded physics equations are not specified. Thus, Corollary~\ref{cor:1} shows a few frequently used operators are applicable in the physics equations in the proposed generalized PRGP. If these operators are used in the physics equations only, the generalized PRGP is able to be inferred. And it is found the frequently used macroscopic traffic models can be formulated only with the listed operators. Note that it is only a necessary condition that the operators of physics equations are in the list, and whether the unlisted operators can be incorporated is not yet proven.
\begin{corollary}\label{cor:1}
    The necessary condition of the applicability of the physics models in the PRGP is that the physics models are composed with a subset of arithmetic, differential, comparison and disjunction operators.
\end{corollary}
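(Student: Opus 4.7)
The plan is to reduce the claim to the differentiability requirement furnished by Theorem~\ref{thm:3} and then argue, by a structural induction over the expression tree of $\Phi$, that any operator family whose presence is compatible with applicability must lie in the listed set. Concretely, Theorem~\ref{thm:3} tells me that applicability of a physics equation $\Phi[\mathbf{f}(\mathbf{x})]=\mathbf{g}(\mathbf{x})$ in the generalized PRGP requires the approximate ELBO $\mathcal{L}$ to be differentiable in high enough orders with respect to every kernel parameter $\eta\in\theta_{\mathbf{f}}$. Since the dependence of $\mathcal{L}$ on $\eta$ enters only through $\mu_{\mathbf{f}}(\mathbf{Z};\eta)$ and through $\Phi\mu_{\mathbf{f}}(\mathbf{Z};\eta)$ inside the penalty term of Eq.~\ref{eq:lllhlb_traffic}, the chain rule forces every sub-operator appearing in the expression tree of $\Phi$ to propagate (possibly weak) derivatives through itself; otherwise $\partial\mathcal{L}/\partial\eta$ fails to exist and the hypothesis of Theorem~\ref{thm:3} is violated. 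So applicability is inherited as a local condition that each sub-operator must satisfy.

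The next step is to classify which elementary operators can possibly satisfy this local condition. I would proceed by case analysis on the node type in the expression tree of $\Phi$. For an arithmetic node ($+,-,\times,\div$, rational powers, and the elementary analytic functions obtained as their closures, e.g.\ $\exp$ in the fundamental diagram of Eq.~\ref{eq:wp4}), the node is analytic on its effective domain, so differentiation commutes through it and high-order derivatives survive. For a differential node, kernel calculus applied to the smooth positive-definite kernel $K(\cdot,\cdot;\eta)$ yields again a smooth positive-definite kernel in $\eta$, so no order of differentiability is lost. For comparison nodes such as $\min$, $\max$, $|\cdot|$, or thresholding, the induced map on $\mu_{\mathbf{f}}$ is piecewise analytic with a measure-zero kink set; Clarke/subgradient calculus supplies a well-defined generalized derivative and, under mild non-degeneracy of the kink set, the expectation in the penalty term is differentiable classically in $\eta$. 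Disjunction nodes (logical $\vee$, case-split selectors) decompose into indicator-weighted combinations of the comparison case and inherit the same conclusion. Any node falling outside these four families would have to be either non-measurable, pathological (e.g.\ a Weierstrass-type nowhere-differentiable nonlinearity), or to introduce an $\eta$-dependent discontinuity whose kink set has positive measure; in each such case the chain-rule propagation fails and $\mathcal{L}$ is not differentiable to high order in $\eta$, so by the contrapositive of Theorem~\ref{thm:3} the equation is not applicable.

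Putting the two steps together gives the contrapositive proof of the necessary condition: if $\Phi$ uses any operator outside the arithmetic/differential/comparison/disjunction list, then some sub-operator destroys the order of differentiability required by Theorem~\ref{thm:3}, so the PRGP inference in Theorem~\ref{thm:2} cannot be carried out. The hard part, and where I would be most careful, is the comparison/disjunction case, because these operators are not classically differentiable and yet I wish to declare them compatible. The key technical lemma I would need is that, because $\mu_{\mathbf{f}}$ is almost surely absolutely continuous in $\eta$ under the smooth kernel prior, the expectation $\mathbb{E}_{p(\mathbf{Z})}\mathbb{E}_{p(\mu_{\mathbf{f}_w}\mid\mathbf{Z},\mathbf{X},\mathbf{Y})}[\cdot]$ in Eq.~\ref{eq:lllhlb_traffic} integrates away the measure-zero non-smooth set of the comparison node, returning a classically differentiable functional in $\eta$; this is the place where the semi-formal taxonomy becomes a real argument, and it is the only step that is not a direct corollary of Theorem~\ref{thm:3}.
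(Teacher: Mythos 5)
Your overall strategy --- reduce to the differentiability requirement of Theorem~\ref{thm:3} and then perform a case analysis on operator type --- matches the paper's proof in spirit, and your treatment of the arithmetic and differential cases is essentially the same as cases (a) and (b) there. However, there is a genuine gap in the step you rely on to obtain \emph{necessity}: you claim that any node outside the arithmetic/differential/comparison/disjunction families ``would have to be either non-measurable, pathological, or introduce an $\eta$-dependent discontinuity whose kink set has positive measure.'' That classification is false. Smooth transcendental operators ($\exp$, $\log$, trigonometric functions, integral or convolution operators) lie outside the four listed families yet preserve differentiability of $\mathcal{L}$ to all orders --- indeed the METANET fundamental diagram in Eq.~\ref{eq:wp4} already uses $\exp$, which you can only rescue by silently enlarging ``arithmetic'' to include analytic closures. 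So the contrapositive argument cannot close, and the necessity direction is not actually established by your proposal. (To be fair, the paper's own proof does not establish necessity either: it only verifies that the listed operators are \emph{sufficient} to keep $\partial \mathcal{L}/\partial \eta$ well defined, and the surrounding text explicitly concedes that whether unlisted operators can be incorporated is not yet proven.)

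Two further points of divergence from the paper's argument are worth noting. First, for comparison operators the paper does not invoke Clarke subgradients or a measure-zero-kink-set argument; it converts inequalities into equations by introducing slack or surplus variables that are absorbed into the remainder GP $\mathbf{g}$, which sidesteps non-differentiability entirely rather than integrating it away under the expectation in Eq.~\ref{eq:lllhlb_traffic}. Your measure-theoretic route could in principle work, but it rests on an unproven lemma (absolute continuity of $\mu_{\mathbf{f}}$ in $\eta$ plus non-degeneracy of the kink set) that you acknowledge but do not supply. Second, you omit the paper's device (e): for any residual non-differentiable term $\epsilon$ the paper simply \emph{defines} $\partial \epsilon / \partial \eta = 1$ so that one bad term cannot disable the gradient contributions of the others. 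Without either the slack-variable trick or this convention, your chain-rule propagation argument breaks at the first genuinely non-differentiable node, which is precisely the case the corollary is meant to cover.
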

\begin{proof}
    The idea of the proof is to show that the derivative $\partial \mathcal{L}/\partial \eta$ can be computed through some operators by the Chain Rule of Differentiation. The possible cases of the operator $\epsilon$ are explained one by one as follows.\\
    (a) The arithmetic operators (plus, minus, multiply, divide) are differentiable. This is proven by the sum, product and quotient rules of differentiation.\\
    (b) The differentiation operator $\epsilon$ is differentiable in all orders. The derivative is shown in the following equation .
    \begin{equation}
        \frac{\partial \mathcal{L}}{\partial \eta} = \frac{\partial \mathcal{L}}{\partial \epsilon} \frac{\partial \epsilon}{\partial \eta}
    \end{equation}
    The derivative $\partial \epsilon/\partial \eta$ is one order higher than $\epsilon$ itself. It requires that $\mathcal{L}$ is differentiable at least in high orders. In traffic models, the differentiation operator $\epsilon$ is in low orders (one or two). Thus, the differentiation requirement is satisfied\\
    (c) The physics model has a limited number of disjunction operators. In this case, each disjunction segment should be differentiable at least in high orders. And the non-differentiable points shall not cause numerical problems. \\
    (d) If the physics model has a comparison operator (greater, less, greater or equal, less or equal), the physics inequalities can be converted to equations with slack or surplus variables. The slack and surplus variables can be a part of the remainder GP $\mathbf{g}$.\\
    (e) If the term $\epsilon$ is non-differentiable, let $\partial \epsilon / \partial \eta = 1$. This setting is used to prevent any partial non-differentiable term to disable the other differentiable terms.\\
    Thus, if the physics equation is composed with a subset of the aforementioned operators (a)-(e) but not all with (e), the gradient $\partial \mathcal{L}/\partial \eta$ is related to $\eta$. 
\end{proof}

\subsection{Encoding the Discretized Traffic Flow Model}
In this section, the discretized traffic flow model is used as an example to present the generalized equation configuration and encoding technique. By re-basing the PRGP model, it is found the discretized traffic models with the two-step  procedure to avoid a substantial change to the previous PRGP method: (1) linking several neighboring inputs and the corresponding outputs via the GP; and (2) calculating the right-hand-side remainder via the discretized physics equations.  In the discretized model, Fig.~\ref{fig:encoding} shows how to reformulate the generalized physics equations (e.g. the discrete traffic flow model) into a generative component for regularization, where the nodes represents GPs; the arrows represents the stochastic conditional dependency between GPs; and the equations above the arrows show the computational transition from one GP to another.

\begin{figure}[H]
    \centering
    \includegraphics[width=0.6\textwidth]{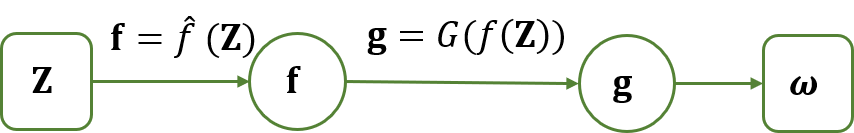}
    \caption{Encoding generalized physics equations into Gaussian process}
    \label{fig:encoding}
\end{figure}

In Fig.~\ref{fig:encoding}, the input vector $\mathbf{Z}$ with the length of $m$ has similar structure of the data input vector $\mathbf{X}$. For the convenience of computation, we further introduce a set of $m$ pseudo observations, $\mathbf{\omega}=[\omega_1,\ldots,\omega_m]^\intercal$, as dummy outputs. The pseudo observation pair $\mathbf{Z,\mathbf{\omega}}$ has the same structure with the data observation pair $\mathbf{X,Y}$, and is designed to encode the physics equations into GP. The pseudo observations $\mathbf{\omega}$ are dummy outputs of the regularization component of the stochastic model. $\mathbf{\omega}$ is used to formulate a valid Bayesian stochastic model, does not have physics meaning, and the value of $\mathbf{\omega}$ can be a vector of any constant value (i.e. the vector of $\mathbf{0}$ in this study). 

In the METANET model, the physics equations are related to four neighboring inputs, $\mathbf{Z_{0,0}}$,$\mathbf{Z_{0,1}}$,$\mathbf{Z_{-1,0}}$,and $\mathbf{Z_{1,0}}$, in time and space, and the corresponding outputs, $ \mathbf{\hat{f}(Z_{0,0}),\hat{f}(Z_{0,1}),\hat{f}(Z_{-1,0}),\hat{f}(Z_{1,0})}$, are estimated for yielding the resultant right-hand-side value $\mathbf{g}$ in Eq.~\ref{eq:gz1}, where the subscript refers to the difference in elements of the input vector $\mathbf{z}=[i, k]$. For example, if the element in the input matrix $\mathbf{Z_{0,0}}$ is $[i, k]$, the corresponding element in $\mathbf{Z_{0,1}}$ is $[i, k+1]$. Eq.~\ref{eq:gz2} shows the equivalent formation of Eq.~\ref{eq:gz1}, where each row of the equation corresponds to Eqs.~\ref{eq:wp1gp}-\ref{eq:wp5gp}, respectively.
\begin{equation}
    \mathbf{G[\hat{f}(Z_{0,0}),\hat{f}(Z_{0,1}),\hat{f}(Z_{-1,0}),\hat{f}(Z_{1,0})]=g}
    \label{eq:gz1}
\end{equation}
\begin{equation}
    \begin{bmatrix}
        G_1\Big[\mathbf{\hat{f}(Z_{0,0})} & \mathbf{\hat{f}(Z_{0,1})}& \mathbf{\hat{f}(Z_{-1,0})}& &\Big] \\
        G_2\Big[\mathbf{\hat{f}(Z_{0,0})} & \mathbf{\hat{f}(Z_{0,1})}& \mathbf{\hat{f}(Z_{-1,0})}& \mathbf{\hat{f}(Z_{1,0})}&\Big]\\
        G_3\Big[\mathbf{\hat{f}(Z_{0,0})} & & & &\Big]
    \end{bmatrix}
    =
    \begin{bmatrix}
    \mathbf{g}_1\\
    \mathbf{g}_2\\
    \mathbf{g}_3
    \end{bmatrix}
    \label{eq:gz2}
\end{equation}

The traffic flow model METANET is reformulated to the functions of estimations in Eqs.~\ref{eq:wp1gp}-\ref{eq:wp5gp}. The encoded physics equations do not have to be the exactly same formulations. The following modifications are made to accommodate the traffic flow model in the PRGP framework. (a) The random error terms $\xi^v_{i,k}, \xi^q_{i,k}$ are removed since the GP already captures the random errors. (b) The on-ramp off-ramp flows, $r_{i,k}, s_{i,k}$, are assumed to be not observed, and are removed in Eq.~\ref{eq:wp1gp}, and those unobserved measures and random noise are captured by the right-hand side term $g_1$. (c) For the implementation concern, a small number is also added to the denominators in Eqs.~\ref{eq:wp1gp}-\ref{eq:wp5gp} to prevent the numerical problem.

\begin{equation}
    G_1\Big[\hat{f}(\mathbf{z}_{0,0}), \hat{f}(\mathbf{z}_{0,1}),\hat{f}(\mathbf{z}_{-1,0})\Big] =
    \hat{\rho}_{i,k+1}-\hat{\rho}_{i,k}-\frac{T}{\Delta_i\lambda_i}[\hat{q}_{i-1,k}-\hat{q}_{i,k}]=g_1
    \label{eq:wp1gp}
\end{equation}

\begin{equation}
\begin{split}
    G_2\Big[\hat{f}(\mathbf{z}_{0,0}), \hat{f}(\mathbf{z}_{0,1}),\hat{f}(\mathbf{z}_{-1,0}), \hat{f}(\mathbf{z}_{1,0})\Big] =&
    \hat{v}_{i,k+1}-\hat{v}_{i,k}-\frac{T}{\tau}[V(\hat{\rho}_{i,k})-\hat{v}_{i,k}]\\
    &-\frac{T}{\Delta_i}\hat{v}_{i,k}(\hat{v}_{i-1,k}-\hat{v}_{i,k})
    +\frac{\sigma T}{\tau\Delta_i}\frac{\hat{\rho}_{i+1,k}-\hat{\rho}_{i,k}}{\hat{\rho}_{i,k}+\kappa}
    = g_2
\end{split}
    \label{eq:wp3gp}
\end{equation}

\begin{equation}
    G_3\Big[\hat{f}(\mathbf{z}_{0,0})\Big] = 
    \hat{q}_{i,k}-\hat{\rho}_{i,k}\hat{v}_{i,k}\lambda_{i}
    = g_3
    \label{eq:wp5gp}
\end{equation}

\subsection{Implementation}
Before estimating the traffic state, the parameters of the generalized PRGP model should be learned with given observed data. In the original ELBO formulation, the strength of the regularization is related to the parameter $\gamma$ and the numerical value of the regularization term. The numerical problem can be caused by the improper value of $\gamma$ and the random error of the regularization term. To address this problem, the inference problem is discomposed into two alternating stochastic optimization problems, as shown in Theorem~\ref{thm:4}. 

\begin{theorem}\label{thm:4}
    The parameter inference problem of the PRGP model is equivalence to two alternating stochastic optimization problems. In the first problem, the input variables are the observed data $\mathcal{D}=\mathbf{(X,Y)}$, and the objective function is to maximize the $\mathcal{L}_f$. In the second problem, the input variables are the random pseudo-inputs $\mathbf{(Z,\hat{f})}$, and the objective function is to maximize $\mathcal{L}_g$, where $\mathcal{L}_{f}$ and $\mathcal{L}_{g}$ are denoted as the partial terms of $\mathcal{L}$, as shown in the following equations.
    \begin{equation}
        \mathcal{L}_f=
        \sum_{l=1}^{d^\prime} \log\Big[\mathcal{N}([\mathbf{Y}]_l|[\mathbf{\mu_f}]_l,[\mathbf{\sigma_f}]_l)\Big]
    \end{equation}    
    \begin{equation}
        \mathcal{L}_g=
        \sum_{w=1}^W\gamma_w \mathbb{E}_{p(\mathbf{Z})} \mathbb{E}_{p(\mu_{\mathbf{f}_w}|\mathbf{Z},\mathbf{X},\mathbf{Y})} \log \Big[\mathcal{N}(\Phi \mu_{\mathbf{f}_w}|\mu_{\mathbf{g}_w},\sigma_{\mathbf{g}_w})\Big]
    \end{equation}
\end{theorem}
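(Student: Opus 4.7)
The plan is to read Theorem~\ref{thm:2}'s ELBO as a direct sum of two pieces, verify that each piece is driven by a disjoint subvector of $\theta$, and then invoke a standard block–coordinate stochastic–ascent argument to get the claimed equivalence. Concretely, from Theorem~\ref{thm:2} the objective is already in the additive form
\begin{equation*}
\mathcal{L}=\underbrace{\sum_{l=1}^{d'}\log\bigl[\mathcal{N}([\mathbf{Y}]_l\mid[\mu_{\mathbf{f}}]_l,[\sigma_{\mathbf{f}}]_l)\bigr]}_{=\mathcal{L}_f}+\underbrace{\sum_{w=1}^{W}\gamma_w\,\mathbb{E}_{p(\mathbf{Z})}\mathbb{E}_{p(\mu_{\mathbf{f}_w}\mid\mathbf{Z},\mathbf{X},\mathbf{Y})}\log\bigl[\mathcal{N}(\Phi\mu_{\mathbf{f}_w}\mid\mu_{\mathbf{g}_w},\sigma_{\mathbf{g}_w})\bigr]}_{=\mathcal{L}_g},
\end{equation*}
so the first step is just to name the two summands.

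The second step is to audit which coordinates of $\theta=[\theta_f,\theta_g]^{\intercal}$ appear in which term. The term $\mathcal{L}_f$ is the ordinary GP marginal log–likelihood on $(\mathbf{X},\mathbf{Y})$, so it depends only on $\theta_f=[\bar{\tau},\eta]^{\intercal}$ through $\mathbf{K}_{\mathbf{f}}$ and $\sigma_{\mathbf{f}}$; no pseudo–observation or physics parameter enters. The term $\mathcal{L}_g$ depends on the pseudo–inputs $\mathbf{Z}$, on the predictive mean $\mu_{\mathbf{f}}(\mathbf{Z})=\hat{\mathbf{f}}(\mathbf{Z})$ supplied by the first GP, and on $\theta_g=[\tau,\nu,\delta,\kappa,v_f,\rho_{cr},\alpha,\dots]^{\intercal}$ entering through the physics operator $\Phi$ and the kernel $\mathbf{K}_{\mathbf{g}}$. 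Thus once $\hat{\mathbf{f}}(\mathbf{Z})$ is treated as a current estimate fed in as data, $\mathcal{L}_g$ really is a function of $\theta_g$ and the random pseudo–sample $(\mathbf{Z},\hat{\mathbf{f}})$ only, which is exactly the second subproblem in the statement.

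Next I would spell out the alternating stochastic scheme this decomposition licenses. At outer iteration $t$: (i) holding $\theta_g^{(t)}$ and the induced $\hat{\mathbf{f}}$ fixed, perform one or several stochastic gradient ascent steps on $\mathcal{L}_f$ in $\theta_f$ using the training data $(\mathbf{X},\mathbf{Y})$; (ii) holding $\theta_f^{(t+1)}$ fixed, draw a fresh batch of pseudo–inputs $\mathbf{Z}$, recompute $\hat{\mathbf{f}}(\mathbf{Z})$, and take stochastic gradient ascent on $\mathcal{L}_g$ in $\theta_g$ using the single–sample Monte Carlo estimator of the double expectation, which is unbiased. Differentiability of both pieces with respect to the relevant parameters is already guaranteed by Theorem~\ref{thm:3} and Corollary~\ref{cor:1}. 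A standard Robbins–Monro argument with diminishing learning rates $\phi$ then gives that any fixed point of the alternation satisfies $\nabla_{\theta_f}\mathcal{L}_f=0$ and $\nabla_{\theta_g}\mathcal{L}_g=0$ simultaneously, hence $\nabla_{\theta}\mathcal{L}=0$ — which is exactly what joint maximisation of $\mathcal{L}$ requires.

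The hard part will be the implicit dependence of $\mathcal{L}_g$ on $\theta_f$ through $\hat{\mathbf{f}}(\mathbf{Z})=\mu_{\mathbf{f}}(\mathbf{Z};\theta_f)$: in strict joint optimisation this cross–term contributes to $\nabla_{\theta_f}\mathcal{L}$ but the alternating scheme deliberately omits it when updating $\theta_f$. To make the equivalence rigorous I would adopt an expectation–maximisation–style decoupling, freezing $\hat{\mathbf{f}}$ inside each outer iteration as a target supplied by the previous first–GP solve, so that the physics penalty acts as a soft posterior regulariser rather than a term back–propagated through the predictive mean; under this convention the two subproblems genuinely decouple and the equivalence at stationarity holds. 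The remaining bookkeeping — checking that the Monte Carlo estimator over $\mathbf{Z}$ and $p(\mu_{\mathbf{f}}\mid\mathbf{Z},\mathbf{X},\mathbf{Y})$ is unbiased and has bounded variance so that standard stochastic approximation applies — is routine given the Gaussianity of both expectations.
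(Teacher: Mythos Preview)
Your proposal is correct as an argument, but it takes a genuinely different and more elaborate route than the paper.

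The paper's proof is essentially one line of gradient linearity. Since $\mathcal{L}=\mathcal{L}_f+\mathcal{L}_g$, a single joint gradient step $\theta^{t+1}=\theta^{t}+\phi\nabla_\theta\mathcal{L}$ can always be rewritten as $\theta^{t+1}=\theta^{t}+\phi_f\nabla_\theta\mathcal{L}_f+\phi_g\nabla_\theta\mathcal{L}_g$ for suitable step sizes $\phi_f,\phi_g$ (trivially $\phi_f=\phi_g=\phi$ works). That is the whole argument; a figure illustrating the vector sum of the two gradient directions in parameter space completes it. Crucially, the paper does \emph{not} partition $\theta$ into disjoint blocks: both gradient components $\nabla_\theta\mathcal{L}_f$ and $\nabla_\theta\mathcal{L}_g$ act on the full parameter vector $\theta$, so the cross–dependence of $\mathcal{L}_g$ on $\theta_f$ through $\hat{\mathbf{f}}(\mathbf{Z})$ is carried automatically by $\nabla_\theta\mathcal{L}_g$ and never needs to be frozen or decoupled.

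You instead set up a block–coordinate scheme (update $\theta_f$ from $\mathcal{L}_f$ alone, then $\theta_g$ from $\mathcal{L}_g$ alone), which is precisely what forces you into the EM–style workaround for the cross–term and the Robbins--Monro stationary–point analysis. That machinery is more rigorous than anything the paper supplies, and it would support a legitimate alternative algorithm, but it establishes a weaker conclusion (coincidence of stationary points under a frozen–$\hat{\mathbf{f}}$ convention) rather than the per–step equivalence the paper claims, and it manufactures a difficulty---the implicit $\theta_f$ dependence inside $\mathcal{L}_g$---that the paper's simpler additive–gradient decomposition never encounters. In short: the paper's ``alternating'' refers to summing two gradient contributions with separate learning rates, not to alternating over parameter blocks.
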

\begin{proof}
    In the original inference problem, the trainable parameters, $\theta$, can be updated by the following equation.
    \begin{equation}
        \label{eq:gradient}
        \theta^{t+1}=\theta^{t}+\phi\nabla_\theta\mathcal{L}
    \end{equation}
    It is trivial to find the step-size $\phi_f, \phi_g$ so that the following equation holds.
    \begin{equation}
        \label{eq:gradient2}
        \phi\nabla_\theta\mathcal{L} = \phi_f\nabla_\theta\mathcal{L}_{f}+\phi_g\nabla_\theta\mathcal{L}_{g}
    \end{equation}    

Fig.~\ref{fig:alg} depicts one iteration in the high dimensional parameter space to illustrate the design concept of the proposed posterior regularization algorithm for the proposed model. In Fig.~\ref{fig:alg}, the parameter space consists of the two dimensions of the outputs (i.e. flow and speed $q,v$); the dots show the vector of parameters $\theta^{(t)}$ is updated to the new vector $\theta^{(t+1)}$ via the auto-differentiation in the $t^{th}$ iteration; the arrows show the directions of the gradients of the objective function (i.e. evidence lowerbound of posterior probability); the blue arrows represent the increments via the conventional GP in two dimensions, the green arrow shows the increment via the proposed physical knowledge regularizer, and the red arrow is the resultant sum of increments. 

\begin{figure}[H]
    \centering
    \includegraphics[width=0.4\textwidth]{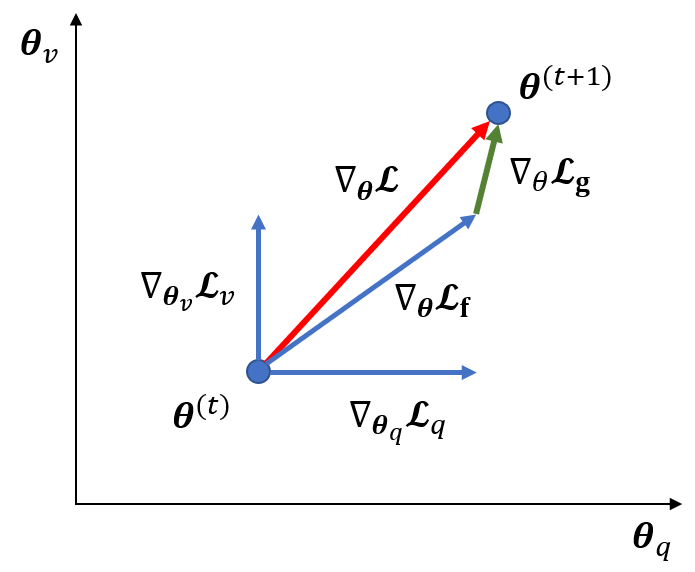}
    \caption{The posterior regularization algorithm for the proposed model}
    \label{fig:alg}
\end{figure}
\end{proof}
Theorem~\ref{thm:4} shows the iteration on the objective function of the proposed generalized PRGP is equivalent to iterate on its two linear components, which is the theoretical basis of the proposed solution algorithm.
Then, the procedure of implementing the alternating stochastic optimization is shown as follows. The stopping criteria include (a) the number of iterations exceeds a prefixed value, and (b) the difference of the objective value $\mathcal{L}_{f}^{(t+1)} - \mathcal{L}_{f}^{(t)}$ is 0 for more than a prefixed number of iterations. \\  

\begin{algorithmic}[1]
    \label{alg:1}
    \STATE Initialize the computational graph and parameters
    \WHILE{not reach stopping criteria}
        \STATE Sample a set of input locations $\mathbf{Z}$
        \STATE Estimate the posterior target function values  $\mathbf{\hat{f}(Z)}$
        \STATE Compute $G_1,G_2,G_3$ in Eq.~\ref{eq:wp1gp}-\ref{eq:wp5gp}
        \STATE Compute ELBO ${\mathcal{L}}=[{\mathcal{L}}_f, {\mathcal{L}}_g]^\intercal$ with samples $\mathbf{(X, Y),(Z, \hat{f}(Z))}$
        \STATE Compute the gradients $\nabla_\theta{\mathcal{L}}_{f},\nabla_\theta{\mathcal{L}}_{g}$
        \STATE Update the parameters  $\theta^{(t+1)}=\theta^{(t)}+\phi_f\nabla_\theta{\mathcal{L}}_{f}+\phi_g\nabla_\theta{\mathcal{L}}_{g}$
    \ENDWHILE
    \STATE Output learned parameters $\theta$
\end{algorithmic}

\vspace{0.1in}

To solve this problem, the inference algorithm is implemented in the open-source auto-differentiable computational graph framework, \emph{Tensorflow}, where the optimizer ADAM \citep{kingma2014adam} is chosen for updating the parameters by rule-of-thumb. Note that the implementation does not rely on the specific framework, and the comparable libraries are potentially feasible as well. 
Before computing the gradients on $\mathcal{L}$, the auto-differentiation tool first creates a computational graph for all data, parameters, and operators. Fig.~\ref{fig:compute} depicts the computational graph, where the vertices represent for the variables (i.e. scalars, matrices, or tensors), the circle vertices involve trainable parameters, the squared vertices represents the estimation, the rounded rectangles are for the data set; the arrows represent the equation calculation; the blue vertices and arrows are for the original GP, and the green vertices are for the physics regularizer. $\hat{\mathbf{K}}_w$ denotes the kernel function of the $w^{th}$ equation. For the convenience of representation, $\mathcal{L}_{v}$, $\mathcal{L}_{q}$, and $\mathcal{L}_w$ are denoted as the part of the objective function $\mathcal{L}$, and they are defined as follows, where $v$ represents the velocity, $q$ represents the traffic flow, $w$ represents the index of the equation. 
\begin{equation}\label{eq:lv}
\mathcal{L}_{v}= \log\Big[\mathcal{N}([\mathbf{Y}]_1|[\mathbf{\mu_f}]_1,[\mathbf{\sigma_f}]_1)\Big]
\end{equation}
\begin{equation}\label{eq:lq}
    \mathcal{L}_{q}=\log\Big[\mathcal{N}([\mathbf{Y}]_2|[\mathbf{\mu_f}]_2,[\mathbf{\sigma_f}]_2)\Big]
\end{equation}
\begin{equation}\label{eq:lw}
    \mathcal{L}_{w}=\mathbb{E}_{p(\mathbf{Z})} \mathbb{E}_{p(\mu_{\mathbf{f}_w}|\mathbf{Z},\mathbf{X},\mathbf{Y})} \log \Big[\mathcal{N}(\Phi \mu_{\mathbf{f}_w}|\mu_{\mathbf{g}_w},\sigma_{\mathbf{g}_w})\Big]
\end{equation}

The computational graph shows the computational dependency of the variables so that each vertex is computed from a function of precursive variables. Given the computational graph, the auto-differentiation libraries can find the gradient of the objective function for optimizing the trainable parameters iteratively.

\begin{figure}[H]
    \centering
    \includegraphics[width=0.6\textwidth]{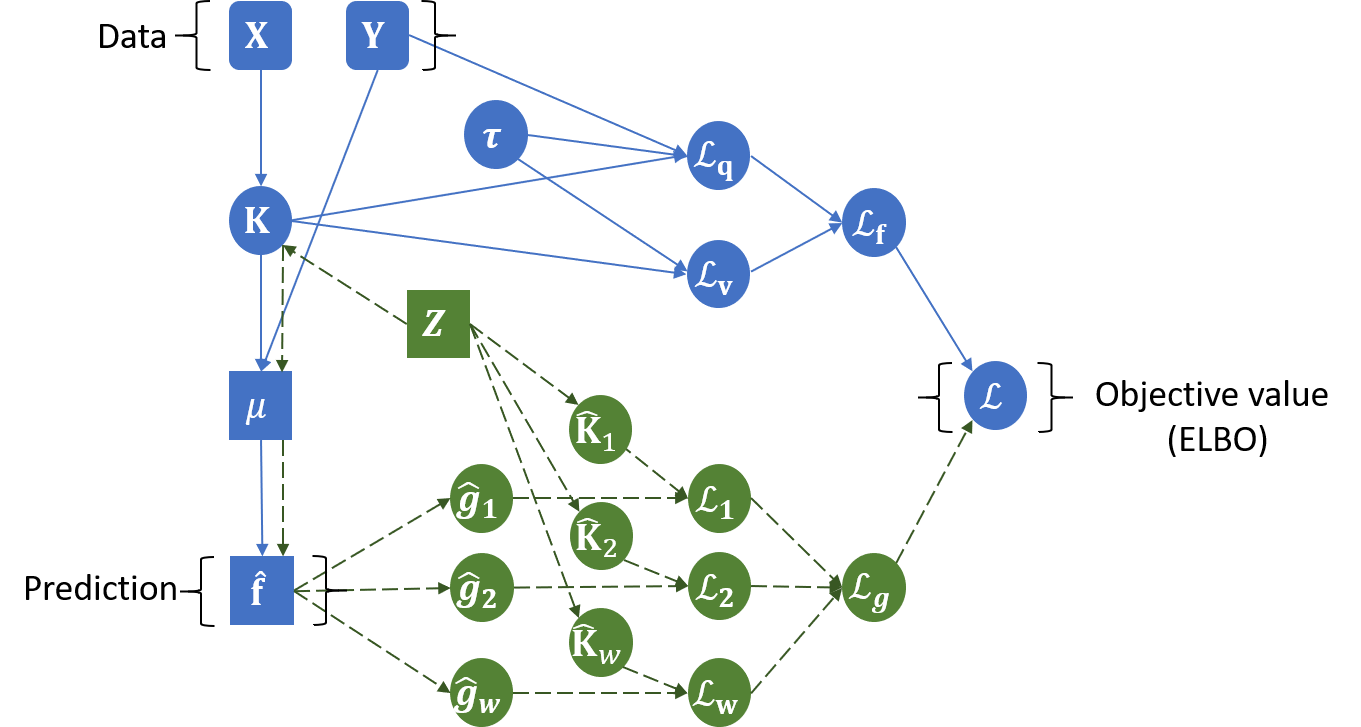}
    \caption{The computational graph of the estimation and the objective function}
    \label{fig:compute}
\end{figure} 

The computational complexity is cubic of the product of the sample size and the output dimension $O((nd^\prime)^3+m^3)$. By applying approximate GP, the computational complexity can be reduced to $O((nd^\prime)^2*\zeta+m^3)$, where $\zeta$ is a constant \citep{liu2020gaussian}.

\section{Numerical Examples and Model Evaluations}\label{sec:4}
\subsection{Data Collection}
To evaluate the performance of the proposed method, we applied the PRGP method to estimate the traffic flow in a stretch of the interstate freeway I-15 across Utah, U.S.
The Utah Department of Transportation (UDOT) has installed sensors every a few miles along the freeway.
Each sensor counts the number of vehicles passed every minute, measures the speed of each vehicle, and sends the data back to a central database, called Performance Measurement System (PeMS).
The collected real-time data and road conditions are available online and can be accessed by the public.
Various data spans in spatial and temporal dimensions are tested. In the studied scenario, the separate freeway segment in I-15 has $4$ detectors.
The data was collected from August 5, 2019 to August 19, 2019. Since the data is collected every $5~min$, there are $288$ observations per detector per day.
The studied stretch is illustrated in Fig.~\ref{fig:stretch1}, where the blue bars represent the locations of traffic detectors. 
To better illustrate the fluctuation of traffic in space and time, Fig.~\ref{fig:groundtruth} plots the distribution of speeds and flows. The speed drops are caused by the sudden congestion near the ramps. By comparing the speed pattern among different days, we can observe similar drops frequently during the peak-hours.

\begin{figure}[H]
    \centering
    \includegraphics[width=0.8\textwidth]{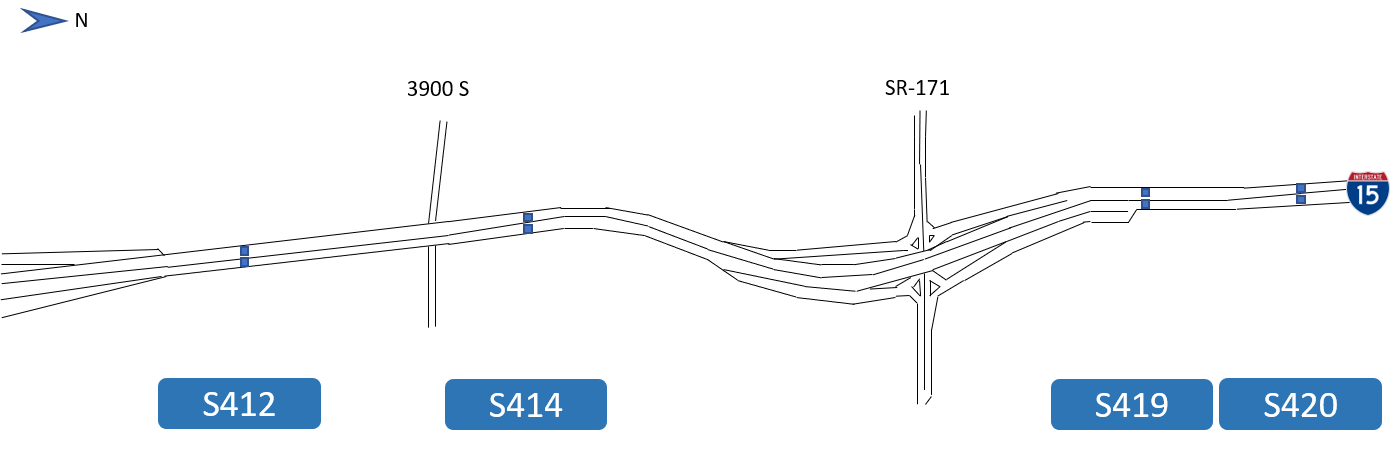}
    \caption{The stretch of the studied freeway segment which includes 4 detectors}
    \label{fig:stretch1}
\end{figure}
\begin{figure}[H]
    \centering
        \begin{subfigure}[b]{0.6\textwidth}
        \centering
        \includegraphics[width=\textwidth]{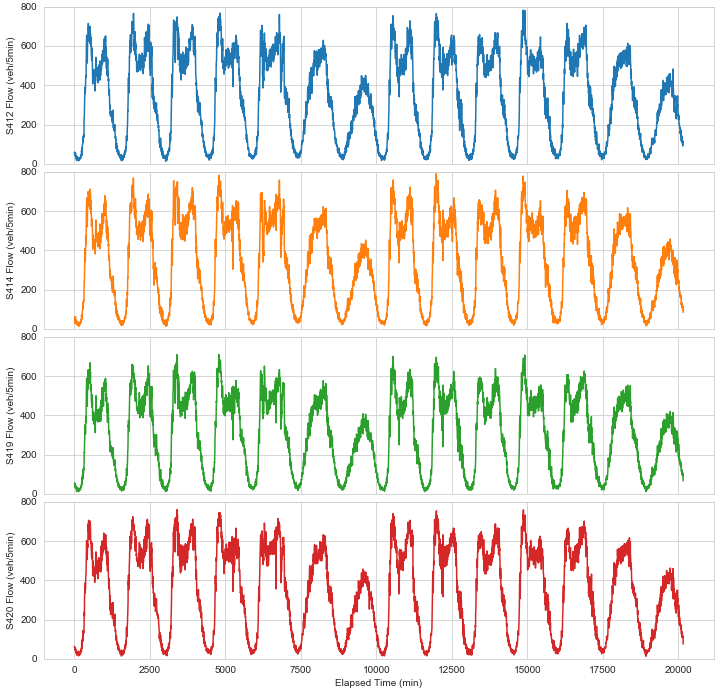}
        \caption{Flow}
    \end{subfigure}
    \end{figure}
 \begin{figure}[H] \ContinuedFloat
  \centering
    \begin{subfigure}[b]{0.6\textwidth}
        \includegraphics[width=\textwidth]{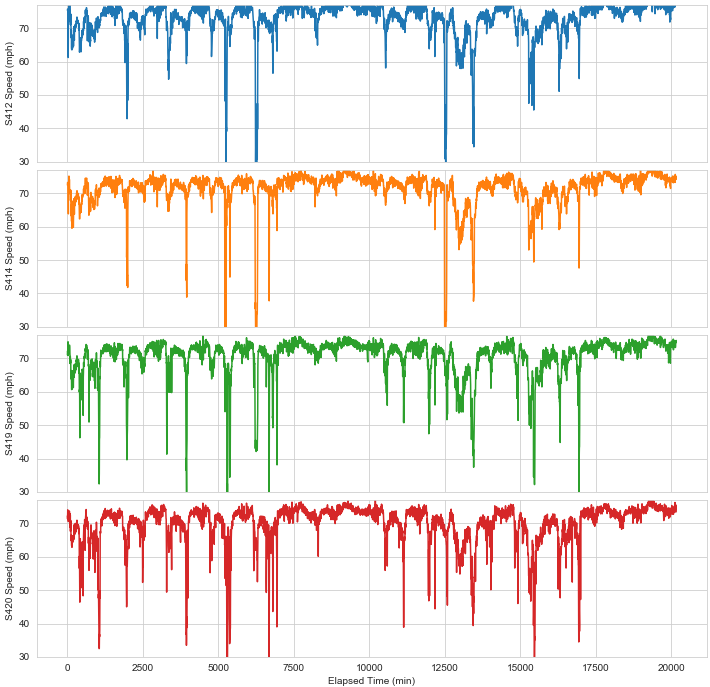}
        \caption{Speed}
    \end{subfigure}
    \caption{The ground truth of the flow and speed in the studied case}
    \label{fig:groundtruth}
\end{figure}

\subsection{Baseline methods}
In this paper, "METANET" represents the off-line calibrated fixed parameter METANET model, "METANET-EKF" refers to the extended Kalman filter for online correcting the estimated flow and speed of METANET model. Herein, Kalman filter and its extensions deal with a series of measurements observed over time considering the random measure error \citep{mihaylova2006unscented,work2008ensemble,wang2022real}.

To prove the superiority of the proposed PRGP compared with the pure ML method and the physics models, this section aims to compare the proposed PRGP method with calibrated deterministic baseline model (METANET) \citep{papageorgiou1989macroscopic}, the Extended Kalman filter (EKF) on the stochastic model (METANET-EKF) \citep{wang2022real}, the Gaussian Process model, and several other pure ML models. The parameters of the key notations of the METANET and EKF have been calibrated with the field data. It should be noted that we selected the METANET and METANET-EKF as the baseline models due to (1) they are based on the same modeling foundation - discretized 2nd order traffic flow model; and (2) they are commonly used in the traffic flow studies since they are most representative and easy to follow. 

The parameters of the calibrated models are listed as follows. Table~\ref{tab:param_metanet} shows the initial METANET model parameters and the parameters for EKF are listed in Table~\ref{tab:param_ekf}.
The calibrated parameters of METANET only methods can be used as the parameters in PRGP. In comparison to the METANET only methods, the parameters of METANET in PRGP is more tolerate in the value. Even if the parameters of the METANET are not so well-calibrated, the PRGP can still use the encoded equations to regularize the GP and update the parameters. However, the updated parameters are not capable to be used in the METANET only method.

\begin{table}[H]
    \centering
    \caption{The initial parameters of the physical model}
    \begin{tabular}{cc}
    \toprule
    Parameter & Value (unit)\\
    \midrule
    $I$         & $20$\\
    $T$         & $1/360~(h)$ \\
    $v_f$       & $120~(km/h)$\\
    $\nu$       & $35~(km^2/h)$\\
    $\delta$    & $1.4$\\
    $\tau$      & $0.05~(h)$\\
    $\alpha$    & $1.4324$\\
    $\Delta_i$  & $0.5~(km)$\\
    $\rho_{cr}$ & $36.85~(veh/km)$\\
    $\kappa$    & $13~(veh/km)$\\
    $\lambda_i$ & $4$\\
    \bottomrule
    \end{tabular}
    \label{tab:param_metanet}
\end{table}

\begin{table}[H]
    \centering
    \caption{The initial parameters of Extended Kalman filter}
    \begin{tabular}{cl}
    \toprule
    Parameter & Value (unit)\\
    \midrule
        $D(\xi^q_{i,k})$ & $100~veh/h$ \\
        $D(\xi^v_{i,k})$ & $11~km/h$\\
        $D(\xi^q_{0,k})$ & $100~veh/h$ \\
        $D(\xi^v_{0,k})$ & $5~km/h$ \\
        $D(\xi^\rho_{11,k})$ & $1.5~veh/km/lane$ \\
        $D(\xi^r_{\Gamma,k})$ & $3~veh/h$ \\
        $D(\xi^\beta_{9,k})$ & $0.001$ \\
        $D(\gamma^q_{i,k})$ & $100~veh/h$\\
        $D(\gamma^v_{i,k})$ & $10~km/h$\\
        $D(\gamma^r_{\Gamma,k})$ & $20~veh/h$ \\
        $D(\gamma^s_{9,k})$ & $10~veh/h$ \\
        $D(\xi^{v_f}_{k})$ & $0.5~veh/h$ \\
        $D(\xi^{\rho_{cr}}_{k})$ & $0.1~veh/km/lane$ \\
        $D(\xi^a_{k})$&$0.01$\\
    \bottomrule
    \end{tabular}
    \label{tab:param_ekf}
\end{table}

Also, "Pure GP" means the Gaussian process based pure machine learning method, and "PRGP" refers to the proposed physics regularized Gaussian process with the aid of METANET. Gaussian Process (GP) is a group of multivariate normally distributed random variables indexed by time and/or space. GP has weak assumptions \citep{rasmussen2003gaussian} and is a widely-used non-parametric stochastic model in various fields, and the previous studies \citep{rodrigues2018heteroscedastic,rodrigues2018multi,neumann2009stacked,xie2010gaussian,ide2009travel,armand2013modelling,liu2013adaptive} have shown the application and effective in the traffic flow problems. 
Notably, the METANET with filtering methods and the proposed PRGP method are technically different: the filtering methods are used to correct the METANET model estimation, which is recognized as model-based methods. The PRGP method is used to regularize the GP training process, which is considered as a data-driven method. 
Other popular ML models, such as Deep Neural Network \citep{xu2020ge}, support vector machine (SVM) \citep{asif2013spatiotemporal}, random forest (RF) \citep{zhang2020freeway}, the Extreme Gradient Boosting (XGB) \citep{zhang2015gradient}, and the Gradient Boosting Decision Tree (GBDT) \citep{ma2017prioritizing}, are also tested as baselines for comparisons.
In the literature, ML is frequently referred to as a black box since its functions work in a way that inputs go in, outputs come out, but the processes between them are opaque.
This research provides the first key step to convert black-box ML methods into the grey-box models, and is elaborated in the result analysis as follows: 
(a) The difference in the PRGP involving various traffic flow models shows the impact of the physics models on the TSE results. This property of PRGP can be used to refine the estimation by using more advanced variations of traffic flow models.
(b) In comparison to the other ML models, GP is to use a linear combination of the observed data $(\mathbf{X,Y})$ to estimate the target points $\hat{f}(X^*)$ at new location and time $\mathbf{X}^*$, and the inference method of GP is derived with a tractable procedure rigorously. Thus, the GP-based methods have notable better performance among the other ML methods, and are chosen as the base methods for the PRGP extensions.
(c) In the previous study, the physics regularizer was derived from encoding of physics knowledge-related equations into GPs, which is a theoretical plausible procedure. The results of PRGP can be interpreted by comparing the encoded physics equations: the better property of the encoded physics equations, the more potentials of the PRGP estimator performance. Besides the tested METANET model, numerous unexpolored traffic flow models can be further investigated to yield improved estimation performance.

\subsection{Case Setup}
To evaluate the performance of the proposed method, the testing cases are constructed regarding the basic TSE problem with unobserved locations. Besides, to show the capability of PRGP, the testing cases are also created for the robustness with random bias and the scarceness with random missing data:
(a) To further test the robustness of methods in each case, we investigate the biased data scenarios by artificially adding high measure biases to the traffic flow in the training data to mimic the common device malfunction situations. The robustness analysis is conducted to show the capability of dealing with the unpredictable misleading inputs in the training phase. Theoretically, the proposed PRGP is more robust than pure GP on noisy dataset. To justify this feature, a certain portion of the training dataset is replaced by synthesized noise. The testing set is not polluted original data. However, it should be noted that the comparable methods, offline METANET method and EKF, for METANET are not designed to contend the biased data. In the robustness study, $50\%$ of the training data is replaced by the flawed data, which are generated with $100-veh/5min$ noise in flows and $5-mph$ noise in speeds, and the testing data keep unchanged. 
(b) In the real-world scenarios, researchers and engineers may suffer from the limited data (e.g., some data are lost). Hence, to further investigate the performance of the proposed model and the baselines under various training data size, we conduct the sensitivity analysis on various sample ratios. The tested sample ratios are $0.714,~0.357,~0.178$ corresponding to $5,760,~2,880,~1,440$ samples, respectively. 

The input variables include the location mileage of each sensor and the time of each read. In the literature, the data index representation $\mathbf{(X,Y)}$ has three major variations: (road segment, time interval), (road segment, day, time interval) and (road segment, week, day-of-week, time interval). In the experiments, we use the compatible representation (road segment, time interval), namely $(i,k)$, for consistence purpose. The traffic measures, flow $q$ and speed $v$, are employed in the training and testing because the density is directly related to these two measures and is not recorded in the original data source.
Note that the other variations of structural representation of the data are fully compatible with the proposed model, and the impact of the data representation may depend on the specific case. 

In the setup of the experiments, the prefixed parameters of the proposed method are summarized in Table~\ref{tab:prefixed}. Note that the strength of regularization $\lambda$ does not need to be fine-tuned because the gradients of the parts of the objective function can be yield separately regarding the parameters. The parameter $m$ has impact on the result, and can be fine-tuned. However, if the value $m$ is not too small (e.g. 1 or 2) to enable the pseudo-sample, the impact on the performance is limited. Considering the time complexity of the algorithm is sensitive to the value of $m$, we selected a constant small value of $m$ in each case for the testing purpose.

\begin{table}[H]
    \centering
    \caption{The prefixed parameters of the proposed method}
    \begin{tabular}{lc}
    \toprule
    Parameter & Value\\
    \midrule
        Testing set size &$576$\\
        The number of pseudo observations $m$ & $10$\\
        The number of iterations & $500$ \\
        The learning rate $\phi$ & $0.01$\\
        The number of physics equations & $3$\\
    \bottomrule
    \end{tabular}
    \label{tab:prefixed}
\end{table}

To quantify the accuracy of estimates, Rooted Mean Squared Error (RMSE) and Mean Absolute Percentage Error (MAPE) of each dimension are used as the performance metric, which are defined in Eqs.~\ref{eq:def_rmse}-\ref{eq:def_mape}. 
\begin{equation}
    RMSE_j = \sqrt{\frac{1}{n}\sum_{l=1}^{n}{\Big([\mathbf{y}_j]_l-[\hat{\mathbf{f}}_j]_l\Big)^2}}, \forall j\in {1,\ldots,d^\prime}
\label{eq:def_rmse}
\end{equation}
\begin{equation}
MAPE_j = \frac{100\%}{n}\sum_{l=1}^{n}{\Big\vert\frac{[\mathbf{y}_j]_l-[\hat{\mathbf{f}}_j]_l}{[\mathbf{y}_j]_l}\Big\vert}, \forall j\in {1,\ldots,d^\prime}
\label{eq:def_mape}
\end{equation}

\subsection{Results Analysis}
Table~\ref{tab:benchmark1} shows the results of the proposed method and the physics models.
In comparison to the physics models, most ML models except SVM and MP can obviously outperform the physics models (i.e., METANET and METANET-EKF) in terms of providing more accurate estimations of both flows and speeds. For example, the GP can yield a $63.29~veh/5min$ of RMSE and a $28.16\%$ of MAPE for flow and a $1.78~mph$ of RMSE and a $1.98\%$ for MAPE for speed, while the physics model based methods produced much higher RMSEs and MAPEs of both flow and speed estimates. Further comparison between those ML models and the PRGP models reveal that PRGP models can improve the accuracy of both flow and speed estimations. However, the improvement is not significant compared with several ML models (e.g., RF and XBDT) , which is because those ML models can already achieve a very good estimation performance and leaves limited space for improvement by the PRGP.
Also, it should be noted that the inputs of the proposed PRGP methods and classical traffic flow model are different. The latter often requires the on-ramp and off-ramp flow observations as inputs, while the proposed method assumes unobserved on-ramp and off-ramp flows in the model and does not require such data.

Fig.~\ref{fig:comp_flow} and Fig.~\ref{fig:comp_speed} compare the flow and speed estimation and the ground truth for the Case I, respectively. In each figure, the blue dot shows the estimated value versus the ground true value, and if the slope of the red trend line is closed to $1$ and the intercept is closed to $0$, the estimation result is considered to be accurate. Fig~\ref{fig:comp_flow} shows the METANET-EKF method outperforms the METANET in flow estimation, however, both of them have lower accuracy in speed estimation than GP and PRGP. The proposed PRGP has similar flow accuracy as GP, and has slightly better speed accuracy than GP.

\begin{table}[H]
    \caption{Comparison of the model results under Case I}
    \centering
    
    \begin{tabular}{p{3cm}|p{2cm}|p{2cm}|p{2cm}|p{2cm}}
    \toprule
         Method & RMSE of flow (veh/5min) & MAPE of flow & RMSE of speed (mph) & MAPE of speed \\ 
    \midrule
        METANET        & $96.17$  & $37.48\%$   & $9.11$ & $11.4\%$ \\
        METANET-EKF    & $82.48$  & $35.95\%$   & $5.74$ & $7.17\%$ \\
        SVM            & $102.15$ & $43.88\%$   & $5.58$ & $6.32\%$ \\
        RF             & $52.91$  & $15.48\%$   & $3.31$ & $3.30\%$ \\
        DNN  &  $67.57$ &  $31.24\%$   &  $4.12$ &  $2.68\%$ \\
        XGB	           & $51.24$  & $12.53\%$   & $2.73$ & $3.15\%$ \\
        GBDT	       & $58.70$  & $18.87\%$   & $3.29$ & $3.26\%$ \\
        pure GP        & $63.29$  & $28.16\%$   & $1.78$ & $1.98\%$ \\
        PRGP           & $41.32$  & $12.10\%$   & $1.55$ & $1.61\%$ \\
    \bottomrule
    \end{tabular}
    \label{tab:benchmark1}
\end{table}

\begin{figure}[H]
    \centering
        \begin{subfigure}[b]{0.45\textwidth}
        \centering
        \includegraphics[width=\textwidth]{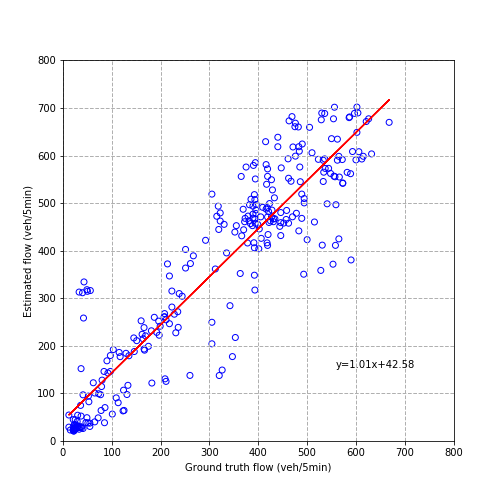}
        \caption{METANET}
        \label{fig:metanet_flow}
    \end{subfigure}
    \hfill
    \begin{subfigure}[b]{0.45\textwidth}
        \includegraphics[width=\textwidth]{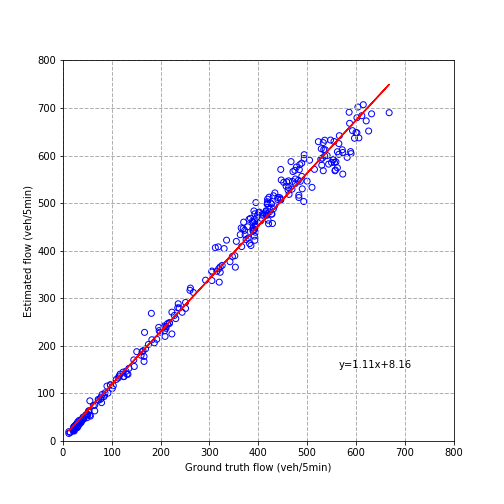}
        \caption{METANET-EKF}
        \label{fig:ekf_flow}
    \end{subfigure}
\end{figure}
\begin{figure}[H]\ContinuedFloat
    \centering
    \begin{subfigure}[b]{0.45\textwidth}
        \includegraphics[width=\textwidth]{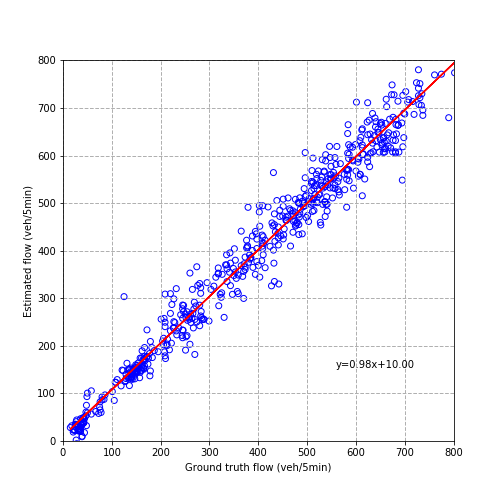}
        \caption{GP}
        \label{fig:gp_flow}
    \end{subfigure}
    \hfill
    \begin{subfigure}[b]{0.45\textwidth}
        \includegraphics[width=\textwidth]{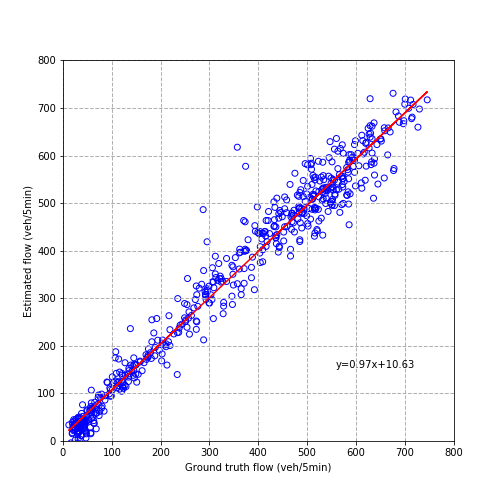}
        \caption{PRGP}
        \label{fig:prgp_flow}
    \end{subfigure}
    \caption{Comparison between flow estimations and the ground truth under Case I}
       \vspace{-0.2in}
    \label{fig:comp_flow}
\end{figure}

\begin{figure}[H]
    \centering
        \begin{subfigure}[b]{0.45\textwidth}
        \centering
        \includegraphics[width=\textwidth]{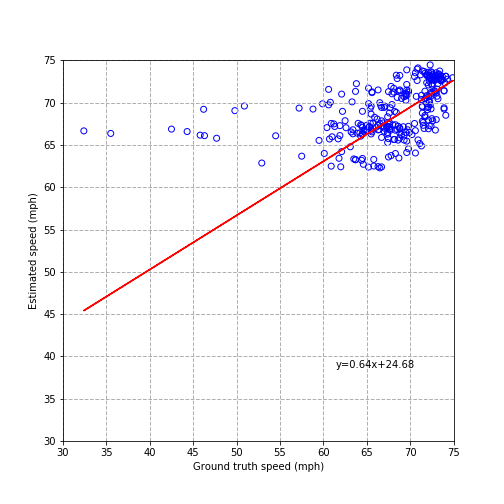}
        \caption{METANET}
        \label{fig:metanet_speed}
    \end{subfigure}
    \hfill
    \begin{subfigure}[b]{0.45\textwidth}
        \includegraphics[width=\textwidth]{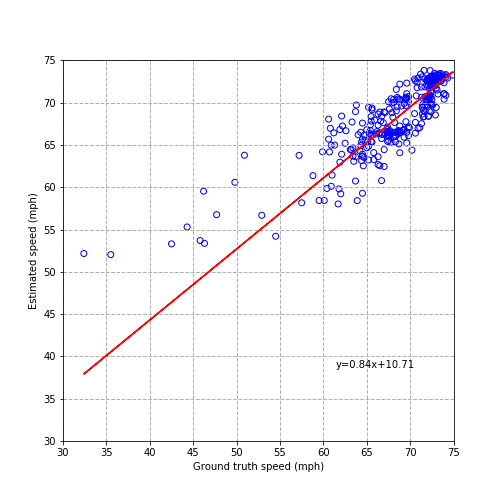}
        \caption{METANET-EKF}
        \label{fig:ekf_speed}
    \end{subfigure}
\end{figure}
\begin{figure}[H]\ContinuedFloat
    \centering
    \begin{subfigure}[b]{0.45\textwidth}
        \includegraphics[width=\textwidth]{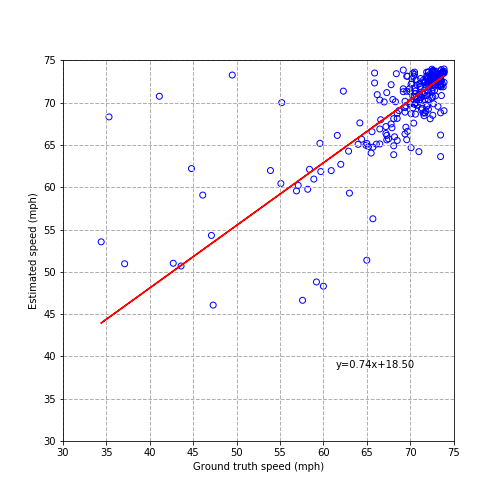}
        \caption{GP}
        \label{fig:gp_speed}
    \end{subfigure}
    \hfill
    \begin{subfigure}[b]{0.45\textwidth}
        \includegraphics[width=\textwidth]{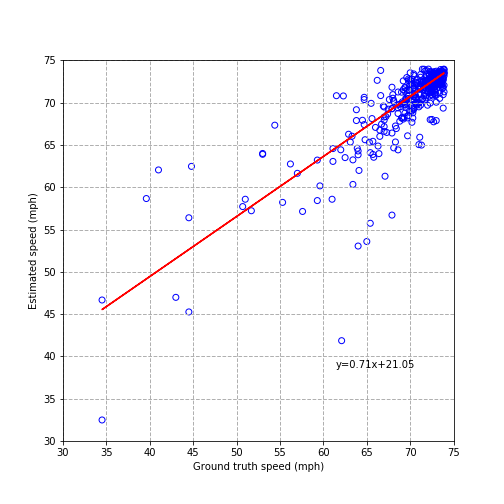}
        \caption{PRGP}
        \label{fig:prgp_speed}
    \end{subfigure}
    \caption{Comparison between speed estimation and the ground truth under Case I}
       \vspace{-0.2in}
    \label{fig:comp_speed}
\end{figure}

Notably, flow estimation at the locations without observations is a challenging task. For example, for the baseline method, METANET-EKF, the relative error of speed was ranged from $14\%$ to $16\%$, and the relative error of density was ranged from $21\%$ to $43\%$. They did not report the error of estimated traffic flow, but we can roughly estimate MAPE of traffic flow may range from $35\%$ to $60\%$. Hence, the results of METANET and METANET-EKF Table~\ref{tab:benchmark1} should be reasonable and proposed model can greatly improve the estimation accuracy. 

Table~\ref{tab:benchmark5} shows PRGP can achieve better estimation when the training data set changes from small to large (with different sample ratio), compared with those ML-based baselines. Notably, the model-based methods, METANET and METANET-EKF can not be adopted in this case due to the incomplete input patterns. Also, it can be observed that with the reduction of the sample ratio, the proposed PRGP model can still yield acceptable estimation results (e.g., $45.87~veh/5-min$ RMSE of flow) while the performance of the other models have been downgraded significantly.

\begin{table}[H]

    \caption{Comparison of model results under various sample ratios in Case I}
    \centering
    
    \begin{tabular}{p{3cm}|p{2cm}|p{2cm}|p{2cm}|p{2cm}|p{2cm}}
    \toprule
        Method & Sample ratio & RMSE of flow (veh/5min) & MAPE of flow & RMSE of speed (mph) & MAPE of speed \\
    \midrule
        SVM         & 0.714 & $91.00$  & $37.71\%$  & $4.38$ & $4.11\%$\\
        RF          & 0.714 & $42.80$  & $11.39\%$  & $3.05$ & $2.96\%$\\
        DNN          & 0.714 & $43.21$  & $11.54\%$  & $2.31$ & $1.98\%$\\
        XGB	        & 0.714 & $42.08$  & $11.24\%$  & $3.59$ & $4.03\%$\\
        GBDT	    & 0.714 & $44.89$  & $11.64\%$  & $3.20$ & $3.05\%$\\
        pure GP     & 0.714 & $63.31$  & $28.16\%$  & $1.63$ & $1.55\%$\\
        PRGP        & 0.714 & $42.02$  & $11.40\%$  & $1.52$ & $1.45\%$\\
    \midrule
        SVM         & 0.357 & $96.33$  & $34.09\%$  &$4.49$  & $4.28\%$\\
        RF          & 0.357 & $55.82$  & $15.06\%$  &$3.69$  & $3.77\%$\\
        DNN         & 0.714 & $53.12$  & $21.14\%$  & $3.28$ & $1.57\%$\\
        XGB	        & 0.357 & $52.04$  & $16.45\%$  &$3.95$  & $4.41\%$\\
        GBDT	    & 0.357 & $57.37$  & $15.26\%$  &$3.68$  & $3.75\%$\\
        pure GP     & 0.357 & $77.18$  & $27.40\%$  &$5.19$  & $4.79\%$\\
        PRGP        & 0.357 & $45.83$  & $12.43\%$  &$5.09$  & $4.60\%$\\
    \midrule 
        SVM         & 0.178 & $97.02$  & $32.89\%$  & $4.55$ &$4.37\%$\\
        RF          & 0.178 & $66.44$  & $19.24\%$  & $4.06$ &$4.06\%$\\
        DNN          & 0.714 & $65.31$  & $20.13\%$  & $4.36$ & $3.17\%$\\
        XGB	        & 0.178 & $62.09$  & $20.54\%$  & $3.84$ &$4.31\%$\\
        GBDT	    & 0.178 & $67.65$  & $19.06\%$  & $3.97$ &$3.92\%$\\
        pure GP     & 0.178 & $125.28$ & $151.10\%$ & $4.39$ &$5.75\%$\\
        PRGP        & 0.178 & $45.87$  & $13.02\%$  & $4.70$ &$5.48\%$\\
    \bottomrule
    \end{tabular}
    \label{tab:benchmark5}
\end{table}

\subsection{Robustness Study}
In practice, besides the missing data, TSE also suffers from the issues of biased data. The biased data refers to that a part of data is unevenly mis-measured due to the dysfunction of the detectors. The METANET and METANET-EKF methods are not designed for dealing with either missing or biased data. In comparison to them, the proposed PRGP is capable to combine the GP and the METANET model to deal with these two challenging issues.

More specifically, Table~\ref{tab:benchmark2} summarizes their estimation performance on the biased training data.
The results show that the pure ML models such as SVM, RF, DNN, XGB, GBDT, and GP have limited resistance to high biased data, e.g., caused by traffic detector malfunctions.
The PRGP model can greatly outperform those ML models with much smaller RMSE and MAPE in flow estimations. Hence, it can be concluded that the proposed PRML model are much more robust than the pure ML models when the input data is subject to unobserved random noise. This is due to PRML's capability of adopting physics knowledge to regularized the ML training process. 
Fig.~\ref{fig:comp_noise} compares the flow and speed estimation and the ground truth for the Case I. 

\begin{table}[H]
    \caption{Comparison of model results with biased data under Case I}
    \centering
    \begin{tabular}{p{3cm}|p{2cm}|p{2cm}|p{2cm}|p{2cm}}
    \toprule
        Method &  RMSE of flow (veh/5min) & MAPE of flow & RMSE of speed (mph) & MAPE of speed \\
    \midrule
        METNET         & $ 125.08$  & $ 72.95\%$  &$5.14$ &$5.28\%$ \\
        METANET-EKF    & $ 104.91$  & $ 63.79\%$  &$4.08$ &$4.15\%$ \\
        SVM            & $ 102.15 $ & $ 43.88\%$  &$5.85$ &$6.32\%$ \\
        RF             & $ 92.91$   & $ 35.48\%$  &$3.31$ &$3.30\%$ \\
        DNN             & $ 93.76$  & $ 36.98\%$  &$3.54$ &$3.37\%$ \\
        XGB	           & $ 91.24$   & $ 32.53\%$  &$2.73$ &$3.15\%$ \\
        GBDT	       & $ 98.70$	& $ 38.87\%$  &$3.29$ &$3.26\%$ \\
        pure GP        & $ 95.32$  & $ 66.18\%$  &$4.43$ &$5.11\%$ \\
        PRGP   & $45.66$    & $14.60\%$   &$4.12$ &$3.72\%$ \\
    \bottomrule
    \end{tabular}
    \label{tab:benchmark2}
\end{table}

\begin{figure}[H]
    \centering
        \begin{subfigure}[b]{0.45\textwidth}
        \centering
        \includegraphics[width=\textwidth]{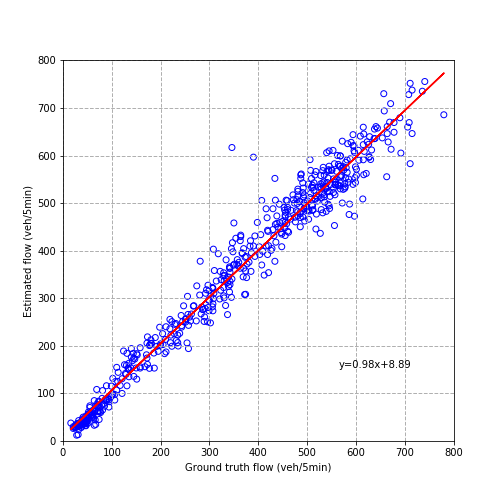}
        \caption{GP flow}
        \label{fig:gp_flow_noise}
    \end{subfigure}
    \hfill
    \begin{subfigure}[b]{0.45\textwidth}
        \includegraphics[width=\textwidth]{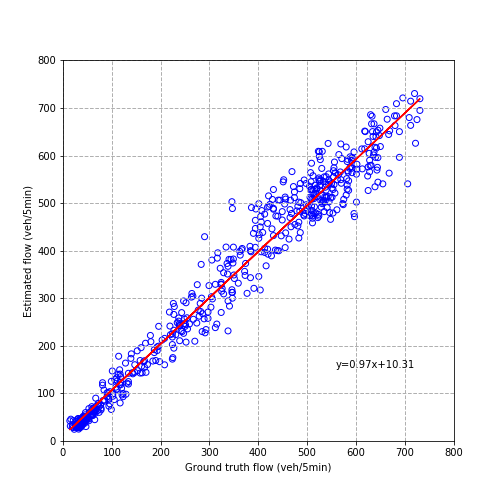}
        \caption{PRGP flow}
        \label{fig:prgp_flow_noise}
    \end{subfigure}
\end{figure}
\begin{figure}[H]\ContinuedFloat
    \centering
    \begin{subfigure}[b]{0.45\textwidth}
        \includegraphics[width=\textwidth]{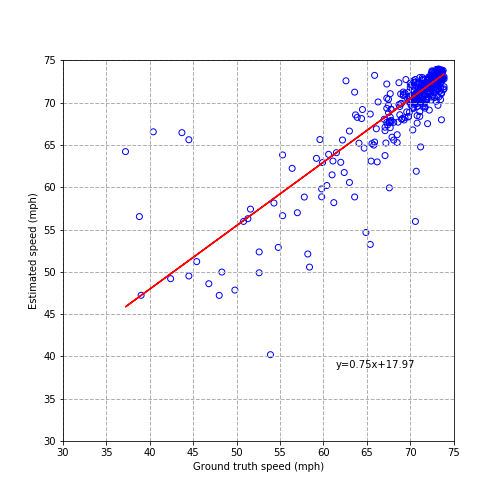}
        \caption{GP speed}
        \label{fig:gp_speed_noise}
    \end{subfigure}
    \hfill
    \begin{subfigure}[b]{0.45\textwidth}
        \includegraphics[width=\textwidth]{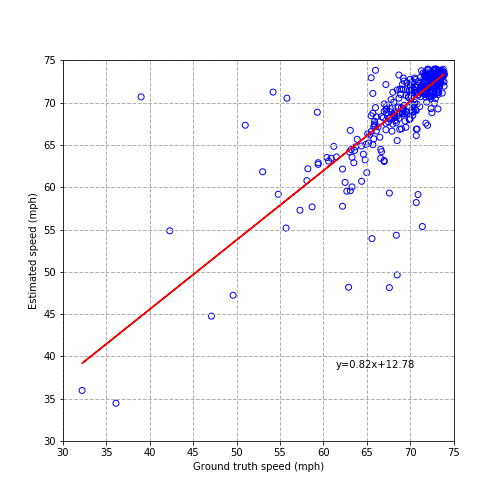}
        \caption{PRGP speed}
        \label{fig:prgp_speed_noise}
    \end{subfigure}
    \caption{Comparison between flow estimations and the ground truth with biased data under Case I}
       \vspace{-0.2in}
    \label{fig:comp_noise}
\end{figure}

\subsection{Scarceness Study}
To examine how missing data and biased data can jointly affect the models' performances, Table~\ref{tab:benchmark6} further shows the resulting sensitivity analysis of the sample ratios on the biased training dataset. From the model testing results, it can be observe that 1) the RMSE and MAPE of flow/speed estimates would be increased with the reducing of sample ratio; 2) the proposed PRGP can still yield acceptable estimation when the sample ratio is relatively large (e.g., 0.714); and 3) the MAPE of flow estimation by the proposed PRGP can go up to 52.4\% when the sample ratio is small (e.g., 0.178). Therefore, it can be concluded that the proposed PRGP can work well when the training dataset is either small or biased (but with sufficient data). However, when the training data is small and biased, none of those models tested in this study could yield satisfactory estimates.

\begin{table}[H]
    \caption{Comparison of model results with various sample ratios (biased data) under Case I}
    \centering
    
    \begin{tabular}{p{3cm}|p{2cm}|p{2cm}|p{2cm}|p{2cm}|p{2cm}}
    \toprule
        Method & Sample ratio & RMSE of flow (veh/5min) & MAPE of flow & RMSE of speed (mph) & MAPE of speed \\
    \midrule
        SVM         & 0.714 & $ 108.02 $ & $ 63.28\%$  &$5.22 $ &$5.53\%$ \\
        RF          & 0.714 & $ 88.81  $ & $ 55.52\%$  &$4.24 $ &$5.24\%$ \\
        DNN          & 0.714 & $ 90.48$  & $ 61.21\%$  &$4.53 $ &$4.67\%$ \\
        XGB	        & 0.714 & $ 92.50$   & $ 52.09\%$  &$4.94 $ &$5.91\%$ \\
        GBDT	    & 0.714 & $ 87.98$	 & $ 52.80\%$  &$4.27 $ &$5.23\%$ \\
        pure GP     & 0.714 & $ 92.30$    & $ 88.8\%$  &$ 4.36$ &$4.29\%$\\
        PRGP        & 0.714 & $ 50.21$    & $ 18.31\%$ &$ 4.30$ &$4.21\%$\\
    \midrule
        SVM         & 0.357 & $ 112.94 $ & $ 61.59\%$  &$5.38 $ &$5.72\%$ \\
        RF          & 0.357 & $ 95.55$   & $ 51.77\%$  &$4.63 $ &$5.53\%$ \\
        DNN          & 0.357 & $ 95.21$  & $ 54.12\%$  &$4.71 $ &$6.12\%$ \\
        XGB	        & 0.357 & $ 98.09$   & $ 53.40\%$  &$5.20 $ &$6.17\%$ \\
        GBDT	    & 0.357 & $ 95.40$	 & $ 52.21\%$  &$4.62 $ &$5.54\%$ \\
        pure GP     & 0.357 & $127.24$   & $ 86.50\%$  &$5.67$& $5.30\%$ \\
        PRGP        & 0.357 & $65.66$    & $ 34.60\%$  &$5.12$& $5.66\%$ \\
    \midrule
        SVM         & 0.178 & $111.26$ & $ 58.92\%$  &$5.32 $    &$5.78\%$ \\
        RF          & 0.178 & $98.70$  & $ 56.38\%$  &$5.09 $    &$5.95\%$ \\
        DNN          & 0.178 & $98.04$ & $ 55.75\%$  &$4.87 $    &$4.93\%$ \\
        XGB	        & 0.178 & $98.61$  & $ 56.97\%$  &$5.23 $    &$5.87\%$ \\
        GBDT	    & 0.178 & $97.03$  & $ 56.61\%$  &$5.06 $    &$5.90\%$ \\
        pure GP     & 0.178 & $132.22$ & $ 88.80\%$  &$4.36 $    &$4.30\%$\\
        PRGP        & 0.178 & $71.21$  & $ 52.4\%$   &$5.33 $    &$5.88\%$\\
    \bottomrule
    \end{tabular}
    \label{tab:benchmark6}
\end{table}

\section{Conclusions and Future Research Directions}\label{sec:5}
In the literature, traffic flow models have been well developed to explain the traffic phenomena, however, have theoretical difficulties in stochastic formulations and rigorous estimation.
In view of the increasing availability of data, the data-driven methods are prevailing and fast-developing, however, have limitations of lacking sensitivity of irregular events and compromised effectiveness in sparse data.

To address the issues of both methods, an assimilation-imputation hybrid method to take the advantages of both methods is investigated.
The data imputation is handled by Gaussian Process (GP) considering the missing data and measure noises while the data assimilation is captured by the traffic models.
By hybridizing them, a Physics Regularized Gaussian Process (PRGP) model is proposed to encode the physics knowledge into GP, such as discretized traffic flow models, in the Bayesian inference structure.
The physics models is encoded as the GP to regularize the conventional constraint-free Gaussian process as a soft constraint.
To estimate the proposed PRGP, a posterior regularized inference algorithm is derived and implemented.
A preliminary real-world case study is conducted on PeMS detection data collected from a freeway segment in Utah and the influential discrete traffic flow models and estimation methods are tested.
In comparison to the pure ML methods and the traffic flow models, the numerical results justify the effectiveness and the robustness of the proposed method.
In comparison to the traffic flow models, those ML models show better performance under the scenario of undetected locations. When the training data is accurate and sufficient, the proposed PRGP methods show similar performance as the pure GP. However, when dealing with biased dataset, the proposed PRGP show superior accuracy.

Please note that this study only offer a modeling method of encoding physics traffic flow models into GP. However, the similar concept may be applicable to other base ML models such as Neural Networks, Random Forest, etc. Due to the different model assumptions and architectures, more investigations would be needed in the future work.

\section*{Acknowledgement}
This research is supported by the National Science Foundation grant "\# 2047268 CAREER: Physics Regularized Machine Learning Theory: Modeling Stochastic Traffic Flow Patterns for Smart Mobility Systems".

\bibliography{main}
\end{document}